\theoremstyle{plain}
\newtheorem{theorem}{Theorem}[section]
\newtheorem{lemma}[theorem]{Lemma}
\theoremstyle{definition}
\newtheorem{definition}[theorem]{Definition}
\theoremstyle{remark}
\newtheorem{remark}[theorem]{Remark}
\newcommand{\brackets}[1]{\left[ #1 \right]}
\newcommand{\parens}[1]{\left( #1 \right)}
\newcommand{\curly}[1]{\left\{ #1 \right\}}
\newcommand{\bbrackets}[1]{\llbracket #1 \rrbracket}
\newcommand{\tr}[1]{\textnormal{Tr}\brackets{#1}}
\newcommand*{\E}[2][]{\mathbb{E}\ifx\\\left[#1\right]\\\else_{#1}\fi \left[#2\right]}
\newcommand{\R}{\mathbb{R}}
\newcommand{\N}{\mathbb{N}}
\newcommand{\C}{\mathbb{C}}
\DeclareMathOperator*{\argmax}{arg\,max}
\newcommand{\pa}{\operatorname{pa}}
\newcommand{\bo}{\operatorname{bo}}
\newcommand{\Tr}{\textnormal{Tr}}
\newcolumntype{C}[1]{>{\centering\arraybackslash}m{#1}}
\newcommand{\cmark}{\checkmark}%
\newcommand{\xmark}{\ding{53}}%
\DeclareRobustCommand{\parhead}[1]{\noindent \textbf{#1.} }
\icmltitlerunning{Stable Differentiable Causal Discovery}
\begin{document}

\twocolumn[
\icmltitle{Stable Differentiable Causal Discovery}



\icmlsetsymbol{equal}{*}

\begin{icmlauthorlist}
\icmlauthor{Firstname1 Lastname1}{equal,yyy}
\icmlauthor{Firstname2 Lastname2}{equal,yyy,comp}
\icmlauthor{Firstname3 Lastname3}{comp}
\icmlauthor{Firstname4 Lastname4}{sch}
\icmlauthor{Firstname5 Lastname5}{yyy}
\icmlauthor{Firstname6 Lastname6}{sch,yyy,comp}
\icmlauthor{Firstname7 Lastname7}{comp}
\icmlauthor{Firstname8 Lastname8}{sch}
\icmlauthor{Firstname8 Lastname8}{yyy,comp}
\end{icmlauthorlist}

\icmlaffiliation{yyy}{Department of XXX, University of YYY, Location, Country}
\icmlaffiliation{comp}{Company Name, Location, Country}
\icmlaffiliation{sch}{School of ZZZ, Institute of WWW, Location, Country}

\icmlcorrespondingauthor{Firstname1 Lastname1}{first1.last1@xxx.edu}
\icmlcorrespondingauthor{Firstname2 Lastname2}{first2.last2@www.uk}

\icmlkeywords{Machine Learning, ICML}

\vskip 0.3in
]



\printAffiliationsAndNotice{}  

\begin{abstract}
  Inferring causal relationships as directed acyclic graphs (DAGs) is an important but challenging problem. 
Differentiable Causal Discovery (DCD) is a promising approach to this problem, framing the search as a continuous optimization. 
But existing DCD methods are numerically unstable, with poor performance beyond tens of variables. 
In this paper, we propose Stable Differentiable Causal Discovery (SDCD), a new method that improves previous DCD methods in two ways: 
(1) It employs an alternative constraint for acyclicity; this constraint is more stable, both theoretically and empirically, and fast to compute. 
(2) It uses a training procedure tailored for sparse causal graphs, which are common in real-world scenarios.
We first derive SDCD and prove its stability and correctness. 
We then evaluate it with both observational and interventional data and in both small-scale and large-scale settings. 
We find that SDCD outperforms existing methods in convergence speed and accuracy, and can scale to thousands of variables. We provide code at
\href{https://github.com/azizilab/sdcd}{github.com/azizilab/sdcd}.

\end{abstract}

\section{Introduction}

Inferring cause-and-effect relationships between variables is a fundamental challenge in many scientific fields, including biology \citep{sachs2005causal}, climate science \citep{zhang2011causality}, and economics \citep{hoover2006causality}. 
Mathematically, a set of causal relations can be represented with a directed acyclic 
graph (DAG) where nodes are variables, and directed edges indicate direct causal 
effects. The goal of causal discovery is to recover the graph from the observed 
data. The data can either be interventional, where some variables were purposely 
manipulated, or purely observational, where there has been no manipulation. 




The challenge of causal discovery is that searching for the true DAG underlying the data is an NP-hard problem. Exact methods are intractable, even for modest numbers of variables \citep{chickering1996learning}. Yet datasets in fields like biology routinely involve thousands of variables \citep{dixit2016perturb}.\looseness=-1

To address this problem, \citet{zheng2018dags} introduced differentiable causal discovery (DCD), which formulates the DAG search as a continuous optimization over the space of all graph adjacency matrices. An essential element of this strategy is an acyclicity constraint, in the form of a penalty, that guides an otherwise unconstrained search toward acyclic graphs.\looseness=-1

This optimization-oriented formulation often scales better than previous methods, and it has opened opportunities to harness neural networks \citep{lachapelle2019gradient}, incorporate interventional data \citep{brouillard2020differentiable}, and use matrix approximation techniques \citep{lopez2022large}. But, while promising, existing DCD methods still struggle to scale consistently beyond tens of variables, or they rely on approximations that limit their applicability (see \cref{sec:results}). 

In this paper, we study the problems of DCD and improve on it, so that it can scale more easily and apply to many types of causal discovery problems. We trace the issues with DCD to the instability of its objective function; in particular, properties of the acyclicity constraint it uses to find a DAG solution. We formalize this notion of stability, show that previous DCD methods are unstable, and then formulate a method that is stable and scalable.

%


%

In details, this paper makes several contributions. First, we present a unifying theoretical view of
existing acyclicity constraints, which explains their intrinsic numerical instability.
We then employ a constraint, the \textit{spectral acyclicity constraint} \citep{lee2019scaling}, that is both faster to compute and offers improved numerical stability. We prove its stability and corroborate its good properties with experiments. 


Finally, we develop \textit{Stable Differentiable Causal Discovery} (SDCD). SDCD is a two-stage optimization procedure for causal discovery that is stable and computationally efficient. In its first stage, it prunes edges without regard for acyclicity. In the second stage, it performs DCD with the spectral acyclicity constraint described above. We prove that the first stage of SDCD does not remove true edges, and we show empirically that it is faster and more accurate than a single-stage optimization.  SDCD removes key barriers that previously limited differentiable causal discovery to small problem sizes and application contexts.





In sum, the main contributions of this work are:
\begin{itemize}
    \item We develop a theoretical analysis of the acyclicity constraints used in DCD, and their numerical instabilities.
    \item We motivate an alternative acyclicity constraint with superior stability, both
     theoretically and empirically.
    \item We propose the SDCD method for efficient DCD. It
    leverages the stable constraint within a two-stage optimization procedure designed for training robustness. We prove that SDCD
    does not compromise accuracy. 
    \item We empirically study SDCD, and show that it efficiently solves problems involving thousands of variables. Compared to previous methods, SDCD achieves faster convergence and improved accuracy on observational and interventional data. 
    \item Code is available at \href{https://github.com/azizilab/sdcd}{github.com/azizilab/sdcd}.
\end{itemize}

\begin{figure}
    \centering
    \includegraphics[width=\linewidth]{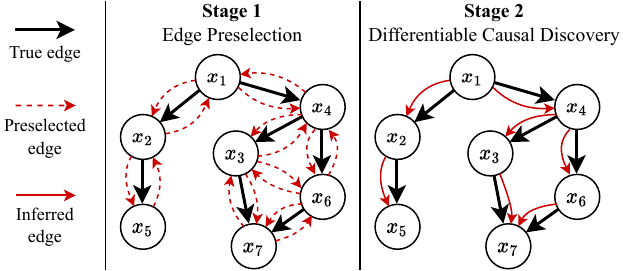}
   \caption{\protect\scalebox{0.98}{Visual representation of the SDCD method.}}
    \label{fig:cartoon}
\end{figure}

\parhead{Related Work}
Causal discovery methods mainly fall into two categories: constraint-based methods and score-based methods \citep{glymour2019review}.

Constraint-based methods identify causal relationships by testing for conditional independence among variables in the data. For example, the PC algorithm \citep{spirtes2000causation} finds the graphs which conform to all the independencies present in the data. COmbINE is an extension to support interventional data \citep{triantafillou2015constraint}. 

On the other hand, score-based methods design a score $S(G)$ that is maximized by the true graph $G^*$, and they aim to find its maximizer $\hat G = \argmax_{G \in \textnormal{DAG}} S(G)$. Existing score-based methods differ in their choice of $S$ and of the optimization method to maximize it. GES \citep{chickering2002optimal} and GDS \citep{peters2014identifiability} optimize the BIC score of a Gaussian linear model by greedily adding or removing edges. GIES modifies GES to support interventional data \citep{hauser2012characterization}, and CAM supports non-linear additive models  \citep{buhlmann2014cam}. 

Differentiable Causal Discovery (DCD), which our work extends, is a type of score-based approach that reformulates the search of the score maximizer into a continuous optimization problem. It uses a numerical criterion to distinguish acyclic graphs from cyclic ones (called the acyclicity constraint). It was initially introduced in \citep{zheng2018dags} as NO-TEARS, which uses linear models, augmented Lagrangian optimization, and a constraint based on the adjacency matrix exponential. Other works extend the methodology to incorporate polynomial regression \citep{lee2019scaling}, neural networks \citep{lachapelle2019gradient,zheng2020learning}, and support for interventional data \citep{brouillard2020differentiable}. 

Alternative acyclicity constraints \citep{lee2019scaling,ng2020role,bello2022dagma} have been proposed, as well as optimization schemes different than augmented Lagrangian \citep{ng2020role,ng2022convergence,bello2022dagma}. \citet{deng2023optimizing} introduced a hybrid approach combining gradient optimization with combinatorial optimization, while \citet{lippe2021efficient} explored removing the acyclicity constraint entirely.\looseness=-1

Despite a rich literature, DCD has difficulty scaling to a large number of variables, exhibiting long training times and numerical instability. \citet{wei2020dags, ng2024structure} identified those limitations, and \citet{lee2019scaling,lopez2022large} addressed these problems with elegant approximations, but they resulted in poor accuracy.

Here, we provide a theoretical understanding of some algorithmic issues with DCD and then use that understanding to develop a better DCD method.
Our proposed method builds on ideas that have been partially studied in previous work, including the acyclicity constraint of \citet{lee2019scaling} and the penalty method of \citet{ng2020role}.
\citet{lee2019scaling} uses the spectral acyclicity constraint for computational efficiency but otherwise does not expand on its advantages. In this work, we provide a novel analysis of the constraint and incorporate it into a new strategy that is more accurate and scalable than existing DCD algorithms. \Cref{table:method-comparison} recapitulates research in DCD and how this work fits in.

\begin{table*}[ht]
\caption{Comparison of Differentiable Causal Discovery methods including our proposed SDCD method.
\textit{Expressive model class}
refers to the capability to approximate
any causal graph with non-linear structural equations.}
\label{table:method-comparison}
\begin{center}
\renewcommand{\arraystretch}{0}
\begin{tabular}{lC{2cm}C{2cm}C{2.4cm}C{2.3cm}}
\toprule
\textbf{Method} & \textbf{Stable Training} & \textbf{Scalable Constraint} & \textbf{Can Use Interventions} & \textbf{Expressive Model Class}    \\
\midrule
SDCD     & \cmark  & $\boldsymbol{O(d^2)}$ & \cmark   & \cmark  \\
\midrule
DCDI     & \xmark  & $O(d^3)$              & \cmark   & \cmark  \\
DCDFG    & \xmark  & $O(md)$               & \cmark   & \xmark  \\
DAGMA    & \xmark  & $O(d^3)$              & \xmark   & \cmark  \\
NO-TEARS & \xmark  & $O(d^3)$              & \xmark   & \xmark  \\
NO-BEARS & \cmark  & $\boldsymbol{O(d^2)}$ & \xmark   & \xmark  \\
\bottomrule
\end{tabular}
\end{center}
\end{table*}

\section{Background and Notations}
\label{sec:background}
We review the Differentiable Causal Discovery (DCD) approach and define the notations used in the paper.

\subsection{Background on Causal Discovery} 
Causal discovery is the task of learning cause-and-effect relationships among a set of variables.
In this work, we consider variables that can be intervened on such that they no longer are affected by their causal parents. 
These interventions are called \textit{structural} or \textit{perfect} interventions \citep{eberhardt2007interventions}.\looseness=-1

\parhead{Causal Graphical Models} Causal graphical models (CGMs) provide a mathematical framework for reasoning about causal relationships between variables. 
Consider a CGM over $d$ variables and $K$ possible interventions on it. 

There are three components:
\begin{enumerate}
    \item A directed acyclic graph (DAG),  $G^* = (V,E)$, where each node, $j \in V$, represents a variable $x_j$, and each edge, $(j, k) \in E$, indicates a direct causal relationship from $x_j$ to $x_k$.
    \item A list of conditional distributions, $p^*_{j}(x_j \mid x_{\pa^{G^*}_j}; 0)$, which specify the distribution of each $x_j$ given its causal parents $x_{\pa^{G^*}_j}$ without intervention (the 0 indicates no intervention).
    \item A list of interventions, $\mathcal{I} = \{ I_0, I_1, ..., I_K\}$, where $I_0 = \varnothing$ (no intervention) and the others $I_k \subset V$ define the target variables of intervention $k$. 
    Alongside is a list of interventional distributions, $p^*_{m}(x_m; k)$, for each $k>0$ and $m \in I_k$, which define the distributions over $x_m$ after intervention $k$.
\end{enumerate}
The joint distribution under intervention $k$ writes:
\begin{equation}
p^*(x; k) = \prod\limits_{j \in V\backslash I_k}  p^*_{j}(x_{j} \mid x_{\pa^{G^*}_j}; 0) \prod\limits_{j \in I_k} p_{j}^*(x_j; k).\label{eqn:interventional-distribution}
\end{equation}
Note $p^*\!(x; 0)$ is the joint on observational data.

\parhead{Data} We observe $n$ data points of the $d$ variables $X = \{(x_1^{i}, ..., x^{i}_d)\}_{i=1}^n$ with labels $T = \{t_i\}_{i=1}^n$ where $t^i \in \{0,...,K\}$ indicates which intervention was applied to $x^{i}$ ($0$ indicating no intervention). For example, in genomics, Perturb-seq screens \citep{replogle2022mapping} measure the expression of $d$ genes across $n$ cells, where each cell can be edited once to change the expression of one of its genes.


\parhead{Causal discovery with score-based methods} 
The goal of causal discovery is to infer the graph $G^*$ from the data $(X, T)$. 
In particular, score-based methods assign a score $S(G)$ to every possible graph $G$, where the score function is designed so that it is maximized on the true graph $G^*$. 
Score-based methods aim to find the maximizer
\begin{equation}
    \hat G = \argmax_{G \in \mathrm{ DAG }} S(G).
\end{equation}
Fix a model class that defines the conditional (and interventional) distributions for each possible DAG $G$ as $\{ p(\cdot \mid G; \theta, k)\}_\theta$. 
We can define the score $S(G)$ to be the maximum log-likelihood that can be achieved under graph $G$ with some regularization over the number of edges $|G|$ \citep{chickering2002optimal}:
\begin{align}
S_{\mathrm{mle}}(G) = \sup_{\theta} \left[\frac{1}{n} \sum_{i=1}^n \log p(x^i \mid G; \theta, t^i)\right] - \lambda |G|.
\label{eq:score-mle}
\end{align}

In the limit of infinite samples ($n \rightarrow \infty$), and under a few assumptions, any maximizer $\hat G$ of \Cref{eq:score-mle} is close to the true $G^*$ \citep{brouillard2020differentiable}. 
More precisely, $\hat G$ and $G^*$ are $\mathcal{I}$\textit{-Markov-equivalent}: they share the same skeleton, $v$-structures, and other restrictive properties at the intervened variables in $\mathcal{I}$ \citep{yang2018characterizing}.



\subsection{Differentiable Causal Discovery}
The main challenge to a score-based method for causal discovery is how to search over the large space of DAGs. Differentiable Causal Discovery (DCD) reformulates this combinatorial search into a continuous optimization problem over the space of all graphs, including cyclic ones \citep{zheng2018dags}. It introduces three key components.

\parhead{Model Class with Implicit Graph}
First, DCD defines a model class with no apparent underlying graph, where each variable is conditioned on all others as $p(\cdot \mid \theta, k) \propto \prod p_j(x_j | x_{-j}; \theta, k)$. Instead, $\theta$ defines the graph $G$ implicitly, such that if $\theta$ renders $x_{-j} \mapsto p_j(x_j | x_{-j}; \theta, 0)$ invariant to some $x_\ell \subset x_{-j}$, then there is no edge from $l$ to $j$. The induced adjacency matrix is denoted $A_\theta$. When $\theta$ induces an acyclic $A_{\theta}$, then $p(\cdot \mid \theta, k)$ defines a valid CGM.



\parhead{Acyclicity Function}
Second, DCD introduces a differentiable function $h(A_\theta)$ that quantifies how ``cyclic'' $A_\theta$ is. $h(A_\theta)$ is high when $A_\theta$ contains cycles with large edge weights, it is low when $A_\theta$ contains cycles with small weights, and $h(A_\theta) = 0$ when $A_\theta$ contains no cycles.

\parhead{Optimization} Finally, DCD reformulates \Cref{eq:score-mle} into a constrained optimization problem only over $\theta$. 
\begin{equation}
    \hat\theta = \argmax_{\substack{\theta \\ \textstyle \mathrm{~s.t.~} h(A_\theta) = 0}.} S_{\alpha, \beta}(\theta).
    \label{eq:dcd-argmax}
\end{equation}
It uses $A_\theta$ in place of $G$, uses the constraint $h(A_\theta) = 0$ in place of $G \in \mathrm{DAG}$, and uses a new objective $S_{\alpha, \beta}$:
\begin{equation}
    S_{\alpha,\beta}(\theta) = \frac{1}{n} \sum\limits_{i=1}^n \log p(x^i ; \theta, t^i) - \alpha \|A_\theta\|_1 - \beta \|\theta\|_2^2,
    \label{eq:dcd-score}
\end{equation}

$S_{\alpha, \beta}$ is a relaxed version of $S_{\mathrm{mle}}$ (\Cref{eq:score-mle}) where the discrete $|G|$ is changed into an L1 regularization of $A_\theta$ (for $\alpha \geq 0$) and an L2 regularization of $\theta$ is included (for $\beta \geq 0$). The $\sup_\theta$ in $S_{\mathrm{mle}}$ (\Cref{eq:score-mle}) is now removed, as it merges with the $\argmax_\theta$ of \Cref{eq:dcd-argmax}.

With \Cref{eq:dcd-argmax} in hand, different methods for DCD solve the constrained optimization in different ways. Some approaches use the augmented Lagrangian method \citep{zheng2018dags,lachapelle2019gradient,brouillard2020differentiable,lopez2022large}, some use the barrier method \citep{bello2022dagma}, and others use $h$ as a regularizing penalty \citep{ng2020role}. 

In all these approaches, the choices of $h$ and the optimization method dictate the optimization behavior and the ultimate quality of the inferred graph. In the next section, we highlight the importance of $h$.

\section{Stable Acyclicity Constraint}
\label{sec:acyclicity}
In this section, we demonstrate how most existing acyclicity constraints can lead to unstable numerical behaviors during optimization, especially with large numbers of variables $d$. We then motivate an alternative constraint, which we show to be theoretically and empirically more stable.

\subsection{Power Series Trace Constraints}
We first introduce a family of constraints. It generalizes existing constraints and reveals their similarities. 
\begin{definition}[The Power Series Trace Family]
    For any non-negative coefficients $(a_k)_{k\in \N^*} \in \R_{\geq 0}^{\N^*}$, consider the power series $f_a(x) = \sum\limits_{k=1}^\infty a_k x^k.$\\
    
    Then, for any matrix $A \in \R_{\geq 0}^{d\times d}$ with non-negative entries, we define the Power Series Trace (PST) function $$h_a(A) = \tr{f_a(A)} = \sum_{k=1}^\infty a_k \tr{A^k}.$$     
\end{definition}
The quantity $h_a(A)$ is closely related to the cycles in the graph represented by $A$. In $h_a(A)$, each $\tr{A^k}$ equals the total weight of all length-$k$ cycles in $A$ -- where the weight of a cycle is the product of its edge weights \citep{bapat2010graphs}. 
The next theorem generalizes the result of \citet{wei2020dags} to show that most $h_a$ can be used to characterize acyclicity.

\begin{restatable}[PST constraint]{theorem}{thPstConstraint}
\label{th:pst-constraint}
For any sequence $(a_k)_{k\in \N^*} \in \R_{\geq 0}^{\N^*}$, if we have $a_k > 0$ for all $k \in \bbrackets{1,d}$, then, for any matrix $A \in \R_{\geq0}^{d\times d}$, we have \looseness=-1
$$\left\{ \begin{array}{l}
    h_a(A) = 0 \Leftrightarrow A \text{ is acyclic},\\
    h_a(A) \geq 0,  \\
    \nabla h_{a}(A) = h_{a'}(A^\top) \text{ with } a'_k = (k+1)a_{k+1}.
\end{array}\right. $$
We say that $h_a$ is a PST constraint.
\end{restatable}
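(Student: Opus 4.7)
The three conclusions can be argued essentially independently. Non-negativity $h_a(A) \geq 0$ is immediate: entrywise non-negativity of $A$ propagates to $A^k$, so each $\tr{A^k} \geq 0$, and with $a_k \geq 0$ the series is a sum of non-negative terms (possibly $+\infty$).

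For the acyclicity equivalence, my plan is to lean on the combinatorial identity invoked just before the theorem: $(A^k)_{ii}$ equals the total weight of length-$k$ closed walks starting at vertex $i$, where each walk is weighted by the product of its edge weights. Summing over $i$, $\tr{A^k}$ equals the total weight of all length-$k$ closed walks in the graph encoded by $A$. The forward direction is then immediate: if $A$ is acyclic, there are no closed walks, every $\tr{A^k}$ vanishes, and $h_a(A) = 0$. For the converse I would argue contrapositively. If $A$ has any cycle, it has a simple cycle, and any simple cycle in a $d$-vertex graph has length $\ell \in \bbrackets{1,d}$. That cycle contributes a strictly positive product of edge weights to $\tr{A^\ell}$, and every other contribution is non-negative, so $\tr{A^\ell} > 0$. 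The hypothesis $a_\ell > 0$ for $\ell \in \bbrackets{1,d}$ then yields $h_a(A) \geq a_\ell \tr{A^\ell} > 0$. This is exactly where the assumption on the first $d$ coefficients is used, and the assumption is tight: zeroing any $a_\ell$ with $\ell \leq d$ would leave cycles of that length undetected.

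For the gradient, I would differentiate termwise using the standard matrix-calculus identity $\nabla_A \tr{A^k} = k(A^\top)^{k-1}$ and reindex $j = k-1$:
$$\nabla h_a(A) = \sum_{k=1}^\infty k\, a_k (A^\top)^{k-1} = \sum_{j=0}^\infty (j+1)\, a_{j+1} (A^\top)^{j},$$
which is precisely the matrix power series with shifted coefficients $a'_k = (k+1)a_{k+1}$ evaluated at $A^\top$. I read the right-hand side of the theorem as this matrix series $f_{a'}(A^\top)$ rather than the scalar $h_{a'}(A^\top)$ literally, by dimensional consistency with the matrix-valued gradient on the left.

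The main obstacle is purely analytic: justifying the termwise differentiation of an infinite matrix series. I would address this by restricting attention to $A$ in the interior of the domain of convergence of $f_a$ (where $h_a$ is finitely defined in the first place), so the formal derivative series also converges absolutely and uniformly on compact subsets, and the Weierstrass $M$-test lets me exchange gradient and infinite sum. Outside this domain $h_a(A) = +\infty$, so the equivalences still make sense tautologically. The combinatorial step requires no such care because the trace identities are finite-sum statements for each fixed $k$.
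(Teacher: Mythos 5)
Your proposal is correct and follows essentially the same route as the paper's proof: non-negativity from termwise non-negativity of $a_k \tr{A^k}$, the acyclicity equivalence via the closed-walk interpretation of $\tr{A^k}$ together with the reduction of any cycle to one of length at most $d$ (the paper phrases this via the pigeonhole principle, you via extracting a simple cycle — the same fact), and the gradient by termwise differentiation inside the radius of convergence. Your remark that the right-hand side of the gradient identity must be read as the matrix series $f_{a'}(A^\top)$ (with the $k=0$ term included) rather than the scalar $h_{a'}$ is a correct reading of the paper's slightly loose notation, which the appendix confirms by writing $\nabla h_a(A) = \sum_{k=1}^{\infty} a_k k (A^{\top})^{k-1}$.
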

The proof is in \Cref{app:proof:pst-constraint}.
In particular, sequences of strictly positive $a_k$ satisfy the conditions for any $d$, so several standard power series are PST constraints. 

For example, the sequence $a^{\exp}_k = \frac{1}{k!}$ recovers the penalty $h_{\exp}(A) = \Tr(\exp(A)) - d$ originally proposed in \citet{zheng2018dags}.

If we define $a^{\log}_k = \frac{1}{k}$, then $f_{a^{\log}} = \sum_{k=1}^\infty \frac{x^k}{k}$ is the power series of $x \mapsto  - \log(1-x)$. With the identity $\Tr \log A = \log \det A$  \citep{withers2010log}, where $\log A$ is the matrix logarithm, we find that $$h_{\log}(A) = \Tr( - \log (I-A)) = - \log \det (I - A).$$ 
This is precisely the constraint introduced in \citet{ng2020role,bello2022dagma}. Hence, even though it uses the matrix determinant instead of the matrix trace, we uncover that $h_{\log}$ is also a PST constraint.


\begin{table}[t]
    \centering
    \resizebox{\columnwidth}{!}{%
    \begin{tabular}{ccccc}
    \toprule
    Name                 & $a_k$           & $f_a$                 & $h_a$              & $\nabla h_a^\top$  \\
    \midrule 
    $h_{\exp}$           & $1/k!$          & $\exp(x)-1$           & $\Tr\exp(A)-d$     & $\exp(A)$          \\
    $h_{\log}$           & $1/k$          & $\log\frac{1}{1-x}$   & $-\log\det(I-A)$   & $(I - A)^{-1}$     \\
    $h_{\mathrm{inv}}$   & $1$            & $\frac{1}{1-x}$       & $\Tr(I - A)^{-1}$  & $(I - A)^{-2}$     \\
    $h_{\mathrm{binom}}$ & $\binom{d}{k}$ & $(1+x)^d-1$           & $\Tr(I + A)^{d}-d$ & $d (I+A)^{d-1}$    \\
    \midrule
    $h_{\rho}$           & --          & --   & $|\lambda_d(A)|$   & $v_d  u_d^\top / v_d^\top  u_d$     \\
    
    \bottomrule
    \end{tabular}
    }
    \caption{(Top) Existing PST constraints with their power series and gradients. (Bottom) The spectral acyclicity constraint, which is not PST.}
    \label{tab:pst-constraints}
\end{table}

\Cref{tab:pst-constraints} shows that other constraints such as $h_{\text{binom}} = \Tr((I+A)^d) -d$ \citep{yu2019dag}, $h_{\text{inv}} = \Tr((I-A)^{-1}) -d$ \citep{zheng2018dags} are also PST.

\subsection{Limitations of PST constraints}
\label{sub:pst-limitations}
In this section, we provide the criteria necessary for constraints to exhibit stable optimization behavior. We prove that PST constraints do not satisfy these criteria and show empirically that optimization with these constraints can be slow or fail. As a solution, we suggest an alternative, non-PST acyclicity constraint and demonstrate its stability.

\begin{definition}
    An acyclicity constraint $h$ is stable if these three criteria hold for almost every $A \in \R_{\geq 0}^{d\times d}$:

    \parhead{\hspace*{0.1cm} - E-stable} $h(sA) = O_{s \rightarrow \infty}(s)$ \\
    \parhead{\hspace*{0.1cm} - V-stable} $h(A) \neq 0 \Rightarrow h(\varepsilon A) = \Omega_{\varepsilon \rightarrow 0^+}(\varepsilon)$ \\
    \parhead{\hspace*{0.1cm} - D-stable} $h$ and $\nabla h$ are defined almost everywhere. 
    \label{def:stability}
\end{definition}

\textit{E-stability} ensures that $h$ does not \textit{explode} to infinity; \textit{V-stability} ensures $h$ does not \textit{vanish} rapidly to 0; \textit{D-stability} ensures that $h$ and its gradient are well \textit{defined}.

These three criteria are all important for maximizing $S_{\alpha, \beta}(\theta)$ 
under the constraint $h(A_\theta) = 0$. 
D-stability and E-stability ensure the constraint remains well-defined and with bounded values throughout the optimization procedure. The V-stability is related to the nature of constrained optimization. Methods like augmented Lagrangian, barrier functions, and penalties use the constraint $h$ to formulate an objective of the form 
$S_{\alpha,\beta}(\theta) - \gamma h(A_\theta) - \mu h(A_\theta)^2$. They then increase $\gamma$ and $\mu$ until $h(A_\theta)$ reaches 0. These increments ensure that the penalty does not become negligible relative to $S_{\alpha,\beta}(\theta)$. 
But without V-stability, $h(A_\theta)$ can shrink quickly very close to 0, while $A_\theta$ remains far from a DAG. So, for full convergence, these methods must grow $\gamma$ and $\mu$ to large values, which can be either inefficient (it requires more training epochs) or fail (as $\gamma$ or $h(A_\theta)$ eventually reach the limit of machine precision). The studies in \Cref{sec:empirical.observational} demonstrate these failure modes happen in practice.


We now show that PST constraints are unstable, especially as $d$ grows.

\begin{restatable}[PST instability]{theorem}{thPstUnstable}
\label{th:pst-unstable}
    For $d \geq 2$, any PST constraint $h$ is both E-unstable and V-unstable. More precisely,\vspace{0.1cm}
 \parhead{\hspace*{0.1cm} - E-unstable} $\exists A \in \R_{\geq 0}^{d \times d}, h(s A) = \Omega_{s \rightarrow \infty}(s^d)$ \\
\parhead{\hspace*{0.1cm} - V-unstable} $\exists A \in \R_{\geq 0}^{d \times d}, h(\varepsilon A) = O_{\varepsilon \rightarrow 0^+}(\varepsilon^d)$ 
    
Also, any PST constraint for which $f_a$ has a finite radius of convergence is \textbf{D-unstable} (e.g., $h_{\log}, h_{\mathrm{inv}}$).
\end{restatable}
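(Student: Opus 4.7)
The plan is to build explicit witness matrices for each instability. For E- and V-instability, the key object is (a scalar multiple of) the adjacency matrix $P$ of a directed $d$-cycle, since $P^d = I$ forces $\tr{P^k} = d$ whenever $d \mid k$ and $\tr{P^k} = 0$ otherwise. This concentrates the PST sum on multiples of $d$, which is precisely what makes $h_a$ grow like $s^d$ (not $s$) and vanish like $\varepsilon^d$ (not $\varepsilon$).

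For E-instability, I note that every summand of $h_a(sP) = \sum_k a_k s^k \tr{P^k}$ is non-negative, so truncating at $k = d$ yields the lower bound $h_a(sP) \geq a_d\, s^d\, \tr{P^d} = d\,a_d\,s^d$. Since $a_d > 0$ by hypothesis and $d \geq 2$, this is already $\Omega(s^d)$, hence not $O(s)$. If in addition $f_a$ has finite radius of convergence, then $h_a(sP)$ is eventually $+\infty$ and E-stability fails even more dramatically. For V-instability, I take $A = cP$ for some $0 < c < R$, where $R$ is the radius of convergence of $f_a$ (possibly $+\infty$). The same trace pattern gives $h_a(\varepsilon A) = d \sum_{j \geq 1} a_{jd}\,(\varepsilon c)^{jd} = d\, a_d\, c^d\, \varepsilon^d + O(\varepsilon^{2d})$ as $\varepsilon \to 0^+$, so $h_a(\varepsilon A) = O(\varepsilon^d)$. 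Evaluating at $\varepsilon = 1$ shows $h_a(A) \geq d\, a_d\, c^d > 0$, so the witness satisfies the $h(A) \neq 0$ premise required by the definition of V-stability, and V-stability is violated.

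For D-instability, under the extra assumption that $f_a$ has finite radius of convergence $R < \infty$, I exhibit a positive-measure subset of $\R_{\geq 0}^{d \times d}$ on which the series defining $h_a$ diverges. The key combinatorial observation is the loop-bound $(A^k)_{ii} \geq (A_{ii})^k$, which holds for any non-negative $A$ because the single walk $i \to i \to \cdots \to i$ contributes the non-negative summand $A_{ii}^k$ to the walk-weight sum for $(A^k)_{ii}$. Summing over $i$ gives $\tr{A^k} \geq \sum_i A_{ii}^k$, hence for any $A$ with $A_{ii} \geq c$ for every $i$, we obtain $h_a(A) \geq d\, f_a(c)$. Picking $c > R$ forces $f_a(c) = +\infty$, and the set $\{A \in \R_{\geq 0}^{d \times d} : A_{ii} \geq c \text{ for all } i\}$ is a product of half-lines of positive Lebesgue measure in $\R_{\geq 0}^{d \times d}$.

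The main obstacle is matching the witness matrix to the quantifier structure of each criterion. For V-instability, the witness must simultaneously lie in the convergence domain of $h_a$ and satisfy $h_a(A) > 0$; this is what forces the rescaling $A = cP$ rather than using $P$ itself. For D-instability, producing a full $d^2$-dimensional set of divergence, as opposed to the naive one-parameter family $\{cI : c > R\}$ (which has measure zero in $\R^{d^2}$), is what motivates the diagonal loop-bound rather than the simpler scalar-multiple example.
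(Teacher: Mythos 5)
Your proof is correct and follows essentially the same route as the paper: both arguments hinge on the adjacency matrix of a directed $d$-cycle, whose traces vanish except at multiples of $d$, giving the $\Omega(s^d)$ lower bound from the $k=d$ term and the $O(\varepsilon^d)$ upper bound from factoring $\varepsilon^d$ out of the convergent tail. The only substantive difference is that you spell out the D-instability claim (via the diagonal bound $\tr{A^k} \geq \sum_i A_{ii}^k$, which yields divergence on a positive-measure set) where the paper simply invokes the definition of the radius of convergence; your rescaled witness $cP$ with $c$ inside the radius of convergence is also a slightly cleaner way to satisfy the $h(A)\neq 0$ premise of V-stability than the paper's bound with $u=\min(1,r_a/2)$.
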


\Cref{th:pst-unstable} is proved in \Cref{app:proof:pst-unstable}. It shows that the instability of the PST constraints worsens exponentially in $d$.  \Cref{fig:constraint-asymptotics} empirically corroborates the theorem with two types of adjacency matrices encountered during DCD: a cycle and some uniformly random noise. It shows that all PST constraints escalate to infinity or vanish to zero as the scale of noise $\varepsilon$ changes (\Cref{fig:constraint-asymptotics} left)
or as the number of variables $d$ increases (\Cref{fig:constraint-asymptotics} right), reflecting their E-instability and V-instability. In addition, the D-instability of $h_{\log}$ and $h_{\text{inv}}$ appears even in small $\epsilon$ or $d$ (vertical lines). We encounter all three instabilities during causal discovery experiments (\Cref{sec:empirical.observational}), leading existing approaches to fail.\looseness=-1

\begin{figure}
    \centering
    \includegraphics[width=\linewidth]{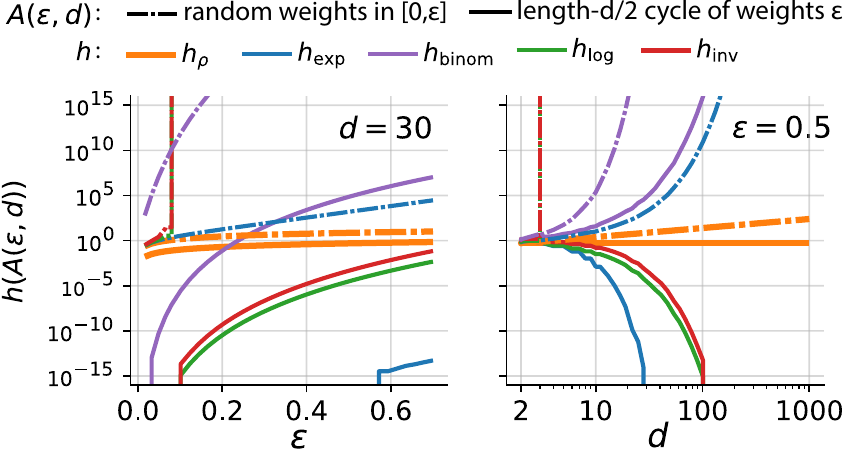}
    \caption{Constraint behaviors when evaluated on uniform random matrices in $[0, \epsilon]^{d \times d}$ (dashed) or a cycle of length $d/2$ with weight $\varepsilon$ (solid). The y-axis shows the constraint's value, the x-axis is (left) the weights' scale $\varepsilon$ (right) the number of variables $d$. Only the proposed $h_\rho$ (orange) remains stable; others vanish to zero exponentially or escalate to infinity (as soon as $d>10$). The vertical dotted lines indicate the constraint escaped its domain of definition. All these failures were encountered during DCD experiments.}
    \label{fig:constraint-asymptotics}
\end{figure}

\subsection{The Spectral Acyclicity Constraint}
\label{sub:spectral}

To overcome the limits of the PST constraint family, we propose to use another type of constraint, one based on the spectrum of $A$, which was first used in \citet{lee2019scaling}. This constraint draws from a characterization of DAG matrices from graph theory - that $A$ is acyclic if and only if all its eigenvalues are zero \citep{cvetkovic1980}.

We write $\lambda_1(A) \in \C$ to $\lambda_d(A) \in \C$, the $d$ eigenvalues of $A$, sorted from smallest to highest complex magnitude
\begin{definition}[Spectral radius]
The spectral radius
$$ h_{\rho}(A) = |\lambda_d(A)|,$$ is the largest eigenvalue magnitude of $A$.
\end{definition}

The next theorem shows that the spectral radius can be used as an acyclicity constraint.

\begin{restatable}[\citet{cvetkovic1980,lee2019scaling}]{theorem}{spectralAcyclicity}
\label{th:constraint-spectral-valid}
The spectral radius is an acyclicity constraint.
    $$h_\rho(A) = 0 \Leftrightarrow A \textnormal{ is a DAG. }$$ 
    We refer to it as the spectral acyclicity constraint. It is differentiable almost everywhere, with gradient $$\nabla h_{\rho} ( A ) = v_d  u_d^\top / v_d^\top  u_d,$$ where $u_d, v_d$ are respectively the right and left eigenvectors associated with $\lambda_d(A)$ \citep{magnus1985differentiating}. 
\end{restatable}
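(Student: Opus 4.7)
The plan is to separate the statement into the equivalence $h_\rho(A) = 0 \Leftrightarrow A$ is a DAG and the differentiability/gradient claim. For the backward direction, if $A$ represents a DAG, topologically ordering its nodes yields a permutation matrix $P$ such that $PAP^\top$ is strictly upper triangular. Similarity preserves the spectrum, and a strictly triangular matrix has all eigenvalues equal to zero, so $h_\rho(A) = 0$.

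For the forward direction, I would use the Newton-power identity $\Tr(A^k) = \sum_{i=1}^d \lambda_i(A)^k$. If $h_\rho(A) = 0$, then every eigenvalue has magnitude zero, and hence $\Tr(A^k) = 0$ for every $k \geq 1$. As noted earlier in \Cref{sec:acyclicity}, when $A \in \R_{\geq 0}^{d \times d}$ the quantity $\Tr(A^k)$ equals the total weight of the length-$k$ cycles in the weighted graph represented by $A$, which is a sum of non-negative terms. Its vanishing for all $k \in \bbrackets{1,d}$ forces the absence of any directed cycle, establishing that $A$ is acyclic.

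For the gradient, the essential input is the Perron-Frobenius theorem: for any $A \in \R_{\geq 0}^{d \times d}$, the spectral radius $\rho(A)$ is itself a real non-negative eigenvalue of $A$, so on its natural domain $h_\rho(A)$ coincides with this real Perron eigenvalue rather than the absolute value of a genuinely complex number. On the open subset of $\R_{\geq 0}^{d \times d}$ where this Perron eigenvalue is algebraically simple, the classical perturbation theory of simple eigenvalues \citep{magnus1985differentiating} applies, yielding real-analyticity of $A \mapsto \lambda_d(A)$ together with the formula
\[
\nabla h_\rho(A) = \frac{v_d u_d^\top}{v_d^\top u_d},
\]
where $u_d$ and $v_d$ are the right and left eigenvectors of $A$ associated with $\lambda_d(A)$, normalized so that $v_d^\top u_d \neq 0$. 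The complement, where $\lambda_d$ fails to be algebraically simple, lies inside the zero set of the discriminant of the characteristic polynomial of $A$, which is a proper algebraic subvariety of $\R^{d^2}$ and therefore has Lebesgue measure zero, giving the ``almost everywhere'' claim.

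The main obstacle I anticipate is reconciling the absolute value in $h_\rho(A) = |\lambda_d(A)|$ with Magnus's derivative formula: Perron-Frobenius guarantees that $\rho(A)$ is attained by a real non-negative eigenvalue, but for reducible or periodic $A$, several complex conjugate eigenvalues may simultaneously reach the radius, preventing a clean identification of $h_\rho$ with a single smooth eigenvalue branch globally. My resolution is to restrict to the generic full-measure subset where the Perron root is both unique in magnitude and algebraically simple; on this subset $h_\rho$ reduces locally to a real-analytic eigenvalue and the Magnus formula yields the stated gradient directly, which suffices for the ``almost everywhere'' differentiability claim.
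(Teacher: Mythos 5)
Your proof is correct, and it largely parallels the paper's: the forward direction is identical (all eigenvalues vanish, hence $\Tr(A^k)=\sum_i\lambda_i(A)^k=0$ for all $k$, hence no cycles by the non-negative cycle-weight interpretation of the trace). For the backward direction the paper instead argues via nilpotency ($A$ acyclic implies $A^d=0$, so every eigenvalue satisfies $\lambda^d v=0$), whereas you pass through a topological order and permutation similarity to a strictly triangular matrix; both are standard and equally short. Where you genuinely diverge, and improve on the paper, is the differentiability claim. The paper cites \citet{magnus1985differentiating} for differentiability at matrices with mutually distinct eigenvalues and then invokes only the \emph{density} of such matrices, which by itself does not yield ``almost everywhere'' (dense sets can be Lebesgue-null). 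Your argument --- that the exceptional set lies in the zero locus of the discriminant of the characteristic polynomial, a proper algebraic subvariety of $\R^{d^2}$ and hence of measure zero --- is the correct justification of the measure-theoretic claim. You also handle a subtlety the paper glosses over: $h_\rho$ is defined as a modulus of a possibly complex eigenvalue, and your use of Perron--Frobenius to identify $h_\rho(A)$ with a real, generically simple and strictly dominant eigenvalue on a full-measure subset of $\R_{\geq 0}^{d\times d}$ (almost every non-negative matrix has all entries positive, hence is primitive) is exactly what is needed to apply the Magnus formula to a single smooth eigenvalue branch. No gaps.
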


\Cref{th:constraint-spectral-valid} is proved in \Cref{app:proof:spectral-constraint}. It implies that $h_\rho$ is D-stable. Next, we prove $h_\rho$ is E-and-V-stable.

\begin{restatable}{theorem}{spectralStable}
\label{th:constraint-spectral-stable}
   $h_{\rho}$ is stable.
\end{restatable}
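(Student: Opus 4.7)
The plan is to reduce all three stability criteria to a single elementary fact: the spectral radius $h_\rho$ is positively homogeneous of degree one, i.e.\ $h_\rho(cA) = c\, h_\rho(A)$ for every $c \geq 0$. This follows because if $\lambda$ is an eigenvalue of $A$ with eigenvector $v$, then $(cA)v = c\lambda v$, so $cA$ has eigenvalues $\{c\lambda_1(A), \ldots, c\lambda_d(A)\}$; in particular $|\lambda_d(cA)| = c\,|\lambda_d(A)|$. I would state and prove this homogeneity in one line as a lemma-let at the start of the proof.

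From here, E-stability is immediate: for every $A \in \R_{\geq 0}^{d\times d}$ and $s \geq 0$,
\begin{equation*}
h_\rho(sA) \;=\; s\, h_\rho(A) \;=\; O_{s\to\infty}(s),
\end{equation*}
with the implied constant $h_\rho(A)$. Similarly, V-stability follows: if $A$ is such that $h_\rho(A) \neq 0$, then for every $\varepsilon > 0$,
\begin{equation*}
h_\rho(\varepsilon A) \;=\; \varepsilon\, h_\rho(A) \;=\; \Omega_{\varepsilon \to 0^+}(\varepsilon),
\end{equation*}
again with explicit constant $h_\rho(A) > 0$. Note the bound holds everywhere, not just almost everywhere, which is a strictly stronger conclusion than Definition of stability requires.

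For D-stability, I would simply invoke the preceding Theorem on the spectral acyclicity constraint, which states that $h_\rho$ and $\nabla h_\rho$ are defined almost everywhere (the exceptional set is the measure-zero set of matrices whose dominant eigenvalue is not simple, where $v_d$ and $u_d$ fail to be uniquely defined up to scaling, and where $v_d^\top u_d$ may vanish). Combining the three items yields that $h_\rho$ is stable in the sense of Definition~\ref{def:stability}.

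I do not expect any real obstacle here. The only mild subtlety is that V-stability in the definition is stated ``for almost every $A$'' under the premise $h(A) \neq 0$; homogeneity gives the stronger pointwise version, so the quantifier is handled trivially. The whole argument is short because it exploits a structural property (linear homogeneity of eigenvalues) that the PST constraints of Theorem~\ref{th:pst-unstable} fundamentally lack—their mixing of powers $A^k$ for $k \in \{1,\ldots,d\}$ is precisely what forces the $\Omega(s^d)$ blow-up and $O(\varepsilon^d)$ vanishing, whereas $h_\rho$ isolates the single leading behavior.
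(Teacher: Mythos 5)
Your proposal is correct and follows essentially the same route as the paper: both arguments rest on the positive homogeneity $h_\rho(cA) = c\,h_\rho(A)$ of the spectral radius to dispatch E- and V-stability, and both cite the differentiability statement of the preceding theorem for D-stability. Your explicit one-line justification of homogeneity via eigenvalue scaling is a minor presentational addition the paper leaves implicit.
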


We refer to \Cref{app:proof:spectral-stable} for the proof. 

\begin{remark}
As a corollary of \Cref{th:constraint-spectral-stable}, $h_{\rho}$ is not another PST constraint (since it is stable). 
\end{remark}

We complete \Cref{fig:constraint-asymptotics} with the empirical behavior of $h_{\rho}$. As theoretically expected, $h_{\rho}$ retains non-extreme values and is suitable for constraint-based optimization.

To further understand the impact of the constraints' stability 
on optimization, we empirically study the optimization path of the augmented Lagrangian and the penalty method with each constraint in Appendix~\Cref{app:fig:constraint-optimization-behaviors}. The instabilities of PST constraints effectively slow their convergence and require increasing $\gamma$ and $\mu$ to excessively large values. In contrast, the optimization paths with $h_\rho$ take the least number of iterations to converge, especially with the penalty method. Moreover, the computation of $h_{\rho}$ can be done in $O(d^2)$ time (See Appendix~\ref{app:sdcd-algo}), contrary to the PST constraints whose computations scale in $O(d^3)$.

We are ready to perform DCD with the stable $h_\rho$.

\section{Stable Differentiable Causal Discovery}
\label{sec:method}

With the stable acyclicity constraint $h_\rho$ in hand, we now introduce Stable Differentiable Causal Discovery (SDCD). SDCD efficiently learns causal graphs in two stages.

\subsection{The SDCD method}

To solve the optimization problem (\ref{eq:dcd-argmax}) with the spectral acyclicity constraint $h_\rho$, SDCD optimizes the following objective with gradient-based optimization:
\begin{equation}
\textstyle
\hat\theta = \argmax\limits_{\theta} S_{\alpha, \beta}(\theta) - \gamma \cdot h_{\rho}(A_\theta),
\label{eq:score-sdcd}
\end{equation}
where $h_\rho$ is used as a penalty with coefficient $\gamma$. SDCD proceeds in two stages (See \Cref{fig:cartoon}).


\parhead{Stage 1: Edge Preselection} 
First, SDCD solves \Cref{eq:score-sdcd} without the constraint, by setting $\gamma = 0$.
\begin{equation}
    \hat\theta_1 = \argmax_{\substack{\theta \\ \forall j, A_{\theta, jj} = 0}} S_{\alpha_1, \beta_1}(\theta)
    \label{eq:stage-1}
\end{equation}
This stage amounts to solving simultaneously $d$ independent prediction problems of each variable given the others (the constraint $A_{\theta, jj} = 0$ prevents self-loops). The goal is to identify nonpredictive edges and remove them in stage 2, akin to \textit{feature selection}.

SDCD selects the \textit{removed edges} as  $\hat R_1 = \{ (j,l) \in \bbrackets{1,d}^2 \mid A_{\hat \theta_1, jl} < \tau_1\}$ where $\tau_1$ is a threshold.

\parhead{Stage 2: Differentiable Causal Discovery}
Next, SDCD re-solves \Cref{eq:score-sdcd}, this time with the constraint and with masking the removed edges from stage 1,
\begin{equation}
    \hat\theta_2 = \argmax_{\substack{\theta \\ \forall (j,l) \in \hat R_1, A_{\theta, jl} = 0}} S_{\alpha_2, \beta_2}(\theta) - \gamma_2 h_{\rho}(A_\theta).
    \label{eq:stage-2}
\end{equation}
The term $\gamma_2$ is initialized at 0 and is increased by a constant, $\gamma_\delta$, after each epoch.
Like other DCD methods, SDCD forms the final graph $\hat G_{\mathrm{SDCD}}$ by selecting the edges in $A_{\hat\theta_2}$ with weight above a threshold $\tau_2$. The details of the algorithm can be found in \Cref{app:sdcd-algo}. 

\begin{remark}
    In both stages, the constraints $A_{\theta, jl} = 0$ are straightforward to enforce by masking the elements in $\theta$ corresponding to $A_{\theta, jl}$ (i.e., fixing them at 0).
\end{remark}

Compared to other methods, SDCD innovates
in two ways: (1) by using the constraint $h_{\rho}$ with the penalty method and (2) by using a two-stage optimization that preselects edges in stage 1 and optimize the DCD objective only on those in stage 2. Without explicit masking, stage 2 would be similar to the barrier or penalty method \citep{ng2020role,bello2022dagma}, with stage 1 only providing a warm start.

\parhead{Motivations for stage 1}Dedicating stage 1 to removing unlikely edges is motivated by the hypothesis that real-life causal graphs are sparse. For example, individual genes in biological systems are typically regulated by a few other genes rather than all other genes \citep{lambert2018human}. A similar hypothesis underlies work in sparse mechanism shift \citep{scholkopf2021toward}.
Hence, stage 1 will likely remove many false edges and facilitate stage 2.
Alternative approaches for variable selection (e.g., markov boundaries \citep{loh2014high, wang2012learning}, skeletons \citep{tsamardinos2006max}, preliminary neighborhood selection \citep{buhlmann2014cam, lachapelle2019gradient}) can be motivated for the same reasons. Here, we found a simple modification to the objective function can effectively serve this purpose.
In practice, we find that stage 1 improves convergence speed and accuracy (\Cref{sec:results}, \Cref{app:tab:empirical:ablation}).
Notably, we find that stage 1 improves the stability of the training in stage 2, even when PST constraints are used in place of the spectral one (\Cref{app:tab:empirical:ablation2}). For this reason, stage 1 may also serve as a beneficial preprocessing step for other causal discovery methods.
In \Cref{th:stage1} below, we prove that stage 1 does not remove true causal parents. 



\subsection{Theoretical guarantees}
We analyze SDCD's time complexity in \Cref{app:sec:time_space_complexity}.
We now provide correctness guarantees for the two stages of SDCD.
We show that theoretically, stage 1 does not remove true causal parents, and so, stage 2 returns an optimal graph.

As done in the field (e.g., \citet{chickering2002optimal,brouillard2020differentiable}), the results focus on the ``theoretical'' $\hat G$ that would be obtained with infinite data and if \Cref{eq:stage-1,eq:stage-2} were solved exactly, in their non-relaxed form. 
We study SDCD in practice in \Cref{sec:results}.

With infinite data, \Cref{eq:stage-1}'s unrelaxed version writes,
\begin{equation}
\resizebox{0.9\hsize}{!}{$\displaystyle%
    \tilde \theta_1 = \!\argmax_{\substack{\theta \\ \forall j, A_{\theta, jj} = 0}} \sum\limits_{\substack{k=0\\ I_k \not\ni j}}^K \!\pi_k \!\!\! \underset{p^{*}\!(x;k)}{\mathbb{E}}\!\!\left[\log p_{j}(x_j | x_{-j}; \theta,0)\right]\! -\! \lambda |A_\theta|,$}
\end{equation}
where $\pi_k$ is the proportion of data coming from intervention $k$. The next theorem characterizes the graph $\tilde G_1 = A_{\tilde \theta_1}$ in terms of Markov boundaries in the true graph $G^*$. A Markov boundary for $j$ is a minimal set of variables that render $j$ independent of all the others. In a causal graph, each $j$ has a unique Markov boundary, consisting of $j$'s parents, $j$'s children, and $j$'s children's parents \citep{neapolitan2004learning}. 

\begin{restatable}{theorem}{theoremMarkovBoundary}
\label{th:stage1}
Under regularity assumptions detailed in \Cref{app:proof:stage-one}, the candidate parents $\pa_j^{\tilde G_1}$ of $j$ selected by stage 1 are precisely the Markov boundary of $j$ in the true graph $G^*$, That is, $\pa_j^{\tilde G_1} = \pa_j^{G^*} \cup ~\textnormal{ch}_j^{G^*} \cup \pa_{\textnormal{ch}_j^{G^*}}^{G^*}$.
\end{restatable}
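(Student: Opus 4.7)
The plan is to reduce the stage-1 optimization to $d$ independent per-variable problems and then identify the variables selected by each problem with the Markov boundary in $G^*$. First I would use the fact that both the expected log-likelihood and the L1 penalty split over rows of $A_\theta$: the conditional $p_j(x_j \mid x_{-j}; \theta, 0)$ is parametrized by a disjoint block $\theta_j$, and $|A_\theta| = \sum_j |A_{\theta_j, j\cdot}|$. After rescaling and dropping terms that do not depend on $\theta$, the stage-1 problem decouples into
\begin{equation*}
\tilde\theta_{1,j} = \argmax_{\theta_j \,:\, A_{\theta_j, jj}=0} \ \mathbb{E}_{x \sim q^*_j}\!\left[\log p_j(x_j \mid x_{-j}; \theta_j, 0)\right] - \tilde\lambda\, |A_{\theta_j, j\cdot}|,
\end{equation*}
where $q^*_j \propto \sum_{k : j \notin I_k} \pi_k\, p^*(\cdot; k)$ is the mixture of distributions under which $j$ is not intervened.

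Second, I would invoke the expressiveness assumption (part of the appendix regularity conditions) that the model class can realize arbitrary conditionals supported on any subset $S \subseteq V \setminus \{j\}$. Then the unregularized maximum of $\mathbb{E}_{q^*_j}[\log p_j(x_j \mid x_S; \theta_j, 0)]$ equals $-H_{q^*_j}(x_j \mid x_S)$ and is attained at $p_j(\cdot \mid x_S) = q^*_j(x_j \mid x_S)$. By the chain rule of entropy, this is nonincreasing in $S$, with equality iff $x_j \indep x_{-j \setminus S} \mid x_S$ under $q^*_j$. For $\tilde\lambda$ small but positive, $\tilde\theta_{1,j}$ therefore picks the unique minimal $S^*_j$ such that $x_j \indep x_{-j \setminus S^*_j} \mid x_{S^*_j}$ under $q^*_j$, i.e. the Markov boundary of $j$ in $q^*_j$.

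The final and hardest step is to show $S^*_j = \pa_j^{G^*} \cup \text{ch}_j^{G^*} \cup \pa_{\text{ch}_j^{G^*}}^{G^*}$. For each intervention $k$ with $j \notin I_k$, the causal Markov condition on the mutilated graph $G^k$ gives $p^*(x_j \mid x_{-j}; k) = p^*(x_j \mid x_{\text{MB}_j^{G^k}}; k)$, and since $\pa_j^{G^k} = \pa_j^{G^*}$ and $\text{ch}_j^{G^k} = \text{ch}_j^{G^*} \setminus I_k$, each $\text{MB}_j^{G^k}$ is a subset of $\text{MB}_j^{G^*}$; so each per-intervention conditional depends on $x_{-j}$ only through $x_{\text{MB}_j^{G^*}}$. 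Combined with the appendix's faithfulness-type regularity, this should carry over to the mixture and give $S^*_j \subseteq \text{MB}_j^{G^*}$; the reverse inclusion comes from the standard fact that faithfulness of $p^*(\cdot; 0)$ to $G^*$ prevents any strict subset of $\text{MB}_j^{G^*}$ from being a Markov boundary. The main obstacle is precisely this mixture argument: without care, the reweighting across interventions via $\pi_k p^*(x_{-j}; k)/q^*_j(x_{-j})$ could make $q^*_j(x_j \mid x_{-j})$ depend on variables outside $\text{MB}_j^{G^*}$, so the regularity conditions in the appendix must be strong enough to preclude this cross-intervention leakage.
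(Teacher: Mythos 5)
Your overall template---decouple into $d$ per-node subset-selection problems, show the unregularized optimum is a Markov boundary, then take the regularization small enough to enforce minimality---is the same as the paper's. The genuine gap is exactly where you suspect it, and it is not closable as you have set things up. By folding all regimes with $j \notin I_k$ into the single mixture $q_j^*$ and fitting one conditional shared across regimes, you reduce the problem to finding the Markov boundary of $j$ \emph{under $q_j^*$}, and that set can be a strict superset of $\bo_j^{G^*}$. Writing $p^*(x;k) = f_k(x_j, x_B)\,R_k(x_{-j})$ with $B = \bo_j^{G^*}$ (the bracket of $j$'s own factor and its surviving children's factors, times everything else), one gets $q_j^*(x_j \mid x_{-j}) = \sum_k \pi_k f_k R_k \,/\, \sum_k \pi_k \bar f_k R_k$, and the $R_k$ only cancel when they are proportional across regimes as functions of the variables outside $B$. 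Concretely: take three nodes with the single edge $1 \to 2$ and node $3$ isolated, $j=1$, and one interventional regime with $I_1 = \{2,3\}$. Then $q_1^*(x_1 \mid x_2, x_3)$ is a mixture of $p^*(x_1 \mid x_2; 0)$ and $p^*(x_1; 1)$ with weights proportional to $\pi_0\, p_3(x_3)$ and $\pi_1\, \tilde p_3(x_3)$, so it genuinely depends on $x_3$ even though $\bo_1^{G^*} = \{2\}$. No regularity assumption in the appendix precludes this, so your $S_j^*$ would wrongly contain node $3$.

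The paper avoids the issue by never forming a mixture at the level of conditionals: the model's conditionals $p_j(x_j \mid x_S; \theta, k)$ are regime-indexed, so the objective for a candidate set $S$ is a sum over $k$ of per-regime cross-entropies, each decomposed into its own KL term $\E[{}]{D_{KL}(p_j^*(x_j \mid x_{-j};k) \,\|\, p_j(x_j \mid x_S;\theta,k))}$ plus a constant. At $S = B$ every one of these KL terms can be driven to zero simultaneously, precisely because of your own (correct) observation that $\bo_j^{G^k} \subseteq \bo_j^{G^*}$ for every $k$ with $j \notin I_k$; strictness and minimality are then obtained by discarding the interventional terms (each is nonnegative) and applying faithfulness to the observational term alone, with $\lambda$ below an explicit data-dependent threshold. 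If you replace your single-mixture step with this sum-of-per-regime-KLs decomposition, the rest of your argument goes through; as written, the identification of $S_j^*$ with $\bo_j^{G^*}$ fails.
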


The assumptions of \Cref{th:stage1} and its proof are detailed in \Cref{app:proof:stage-one}. The assumptions are reasonable: $p^*$ should be in the model class $\{p_\theta\}$, the expectations should be well defined, and ``faithfulness'' should hold (that is, $G^*$ doesn't have superfluous edges).

\Cref{th:stage1} gives two guarantees: (1) stage 1 does not remove causal parents and (2) stage 1 returns only a subset of the edges, not all of them. For instance, if $G^*$ is sparse such that each node has at most $k$ parents, then only $O(dk^2)$ edges are returned, which is essentially linear in $d$ for small $k$ (see \Cref{app:proof:stage-one-lemma}).

\Cref{th:stage1} implies that \citet[Theorem 1]{brouillard2020differentiable} still applies, and we deduce that stage 2 remains optimal under the stated assumptions (see \Cref{app:proof:stage-two}).


The theoretical results are reassuring. In the next section, we study SDCD's empirical performance to examine the impact of
finite data, nonconvex optimization, and relaxations.

\section{Empirical studies}
\label{sec:results}

We compare SDCD to state-of-the-art baselines on multiple datasets. We find that SDCD achieves significantly better scores in both observational and interventional settings, particularly excelling at recovering sparser graphs. SDCD is the only method to scale to thousands of variables without sacrificing accuracy. 

\subsection{Evaluation Setup}
\label{sec:empirical-sdcd}

\parhead{Baselines for interventional data}
For datasets with interventional data, we compare SDCD against DCDI \citep{brouillard2020differentiable}, DCD-FG \citep{lopez2022large}, and GIES \citep{hauser2012characterization}. 


\parhead{Baselines for observational data} When the dataset contains only observational data, we include the interventional methods and further compare against NO-TEARS \citep{zheng2018dags}, NO-BEARS \citep{lee2019scaling}, DAGMA \citep{bello2022dagma}, and SCORE \citep{rolland2022score}. In addition, we report \textit{sortnregress} \citep{reisach2021beware}, a trivial baseline that should be outperformed (see Robustness Checks). We further included NOCURL \citep{yu2021dags} and AVICI \citep{lorch2022amortized} in \Cref{app:fig:empirical:observational}.


\parhead{Metrics}
We evaluate performance using the structural Hamming distance (SHD) between the true $G^*$ and each method's output graph. SHD is standard in causal discovery. It quantifies the minimum number of edge additions, deletions, and reversals needed to transform one graph into the other. Lower SHDs indicate better reconstructions of $G^*$.

\parhead{Robustness Checks}
Previous works detailed common issues with the SHD metric \citep{tsamardinos2006max} and data simulation processes \citep{reisach2021beware}. We include additional metrics and baselines recommended by previous works to ensure our evaluation is robust. 
Further details and results are detailed in \Cref{app:empirical:robustness}.

\parhead{Data}
We simulate observational and interventional data for a wide range of $d$ (number of variables), varying the graph density with $s$ (the average number of parents per node), and varying the number of variables that are intervened on. The simulations proceed as done in \citet{brouillard2020differentiable,bello2022dagma}, by sampling a random graph, modeling its conditionals with random neural networks, setting its interventional distribution to Gaussian, and drawing samples from the obtained model.
More details are in \Cref{app:empirical:simulations-details}. In all experiments, the number of observational samples is fixed at $10,000$, and an additional $500$ samples are added for each perturbed variable.

To further validate the results against the strongest baseline, we evaluate SDCD on the simulated data generated in \citet{brouillard2020differentiable} (DCDI) and compare our results against their reported SHD values.

\parhead{Setting} Consistent with prior work (e.g., DAGMA, NOTEARS), we do not conduct hyperparameter optimization for the experiments. Instead, we fix a single set of parameters for all experiments (see \Cref{app:empirical:hyperparameters}). The training time on CPU is measured on an AMD 3960x with 4-core per method; on GPU on an AMD 3960x with 16-core and an Nvidia A5000.

\parhead{SDCD Modeling Assumptions}We use neural networks (NNs) to parameterize the model class, as done in \citet{lachapelle2019gradient,zheng2020learning}. Each $p_j(x_j \mid x_{-j} ; \theta, k)$ is a Gaussian distribution over $x_j$ with mean and variance given by an NN as a function of all the other $x_{-j}$. 
More details about the NN architecture are in \Cref{app:sdcd-model}.
Also, SDCD is amenable to other model classes, such as normalizing flows \citep{brouillard2020differentiable}.

\begin{figure}[t]
    \centering
    \includegraphics[width=\linewidth]{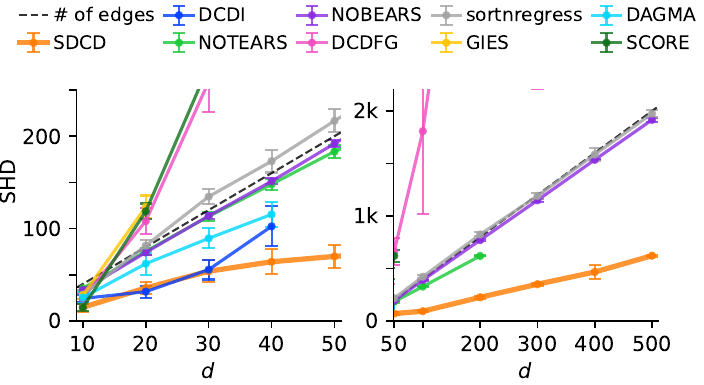}
    \caption{SHD across simulations on observational data with increasing numbers of variables $d$. SDCD achieves the best SHDs. It is the only method scaling above 200 variables with nontrivial SHD. Missing data points imply the method failed to run. Error bars indicate std on 30 random datasets for $d \leq 50$ and five for $d>50$ (175 total). Lower is better.}
    \label{fig:observational}
\end{figure}

\begin{figure}
    \centering
    \includegraphics[width=0.95\linewidth]{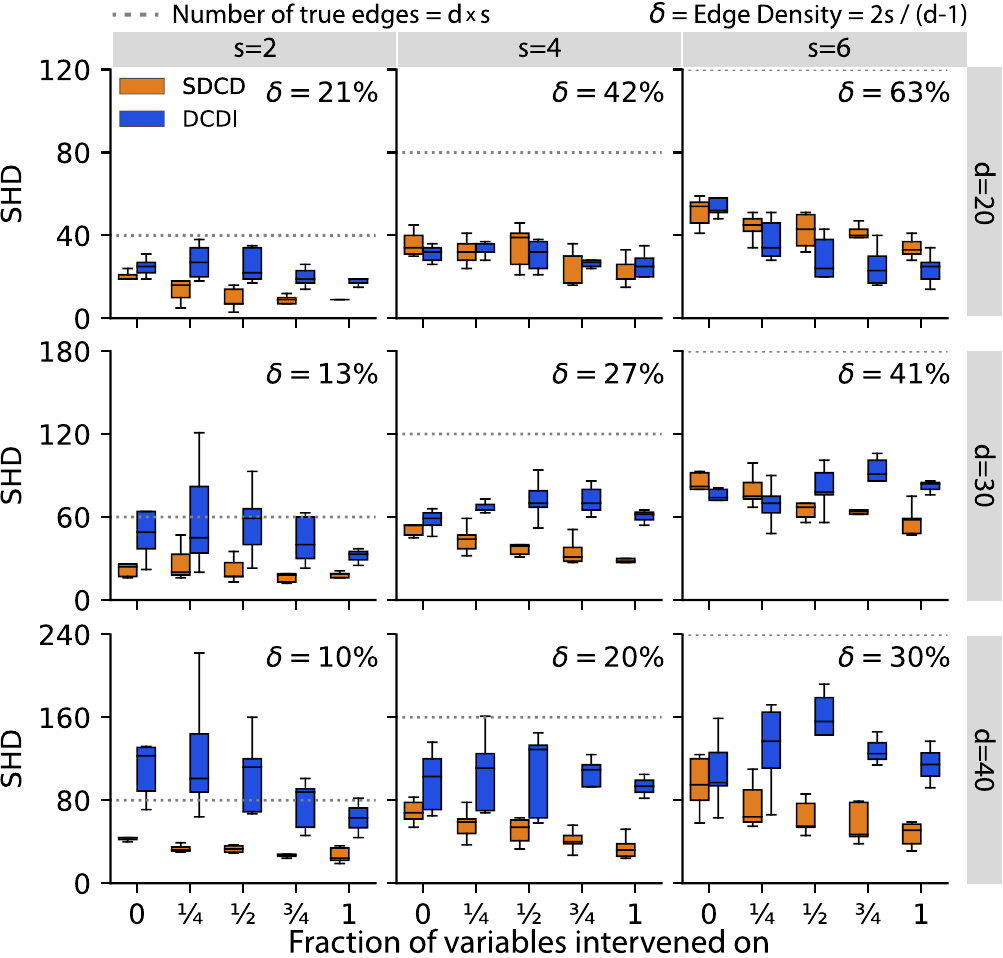}
    \caption{SHD across simulations with an increasing proportion of variables intervened on, varying the total number of variables $d$ (columns) and average edges per variable $s$ (rows). 
    SDCD is the only method to consistently improve with interventional data and has the best SHDs for sparse graphs (edge density $\delta \leq 45\%$).
    Each boxplot over 5 random datasets (45 datasets total).
    \looseness=-1}
    \label{fig:interventional}
\end{figure}

\subsection{Observational Data Experiments}
\label{sec:empirical.observational}

We evaluate all eight methods on a wide range of number of variables
$d$, with a fixed average number of edges per variable $s = 4$,
and repeat the experiments over 30 random datasets. \Cref{fig:observational} reports the results and detailed tables are provided in \Cref{app:fig:empirical:observational} with additional baselines.

SDCD outperforms the other methods in accuracy at every scale and speed. It can be explained by SDCD's stability.

\parhead{Failures of other methods} DCDI is competitive on small $d$ but crashes for $d>40$ -- as discussed in \ref{sub:pst-limitations}, for $d=50$, NaNs appear during training when $h_{\text{exp}}$ underflows due to V-instability; for $d>50$ NaNs appear right at initialization when $h_{\text{exp}}$ overflows due to E-instability. DAGMA fails to converge within 6 hours for as few as 30 variables due to the learned adjacency matrix escaping the domain of definition of $h_{\log}$, caused by D-instability. DAGMA attempts to stay within the domain of definition of $h_{\log}$ by reducing the learning rate near the singularities, but this is often not sufficient and it significantly slows down training.
NO-TEARS and NO-BEARS perform similarly to the trivial baseline \textit{sortnregress}, confirming the findings of \citet{reisach2021beware}. DCD-FG scales well but has exceptionally high SHD due to predicting very dense graphs -- which we attribute to its low-rank approximation. 

Finally, we note that most methods outperform the SHD of the empty graph (it is the number of edges, as dashed line).

To show that SDCD's performance is robust to a comprehensive set of scenarios, we provide additional metrics for these experiments in \Cref{app:empirical:robustness,app:fig:empirical:observational-f1-cpdag,app:fig:empirical:observational_recall}.

The runtimes associated with \Cref{fig:observational} are presented in Appendix~\Cref{app:fig:empirical:runtime}. SDCD-GPU runs under 15 minutes for all values of $d$ in \Cref{fig:observational} experiments (e.g., $d=500$). In Appendix~\Cref{app:fig:empirical:observational-large}, we further demonstrate that SDCD can scale up to 4,000 variables under 2h45.

\subsection{Interventional Data Experiments}

Next, we compare SDCD, DCDI, DCD-FG, and GIES over datasets with an increasing proportion of intervened variables. We show the results for SDCD and DCDI in Figure~\ref{fig:interventional} and all methods in the Appendix (DCD-FG and GIES performed consistently worse). As expected, the methods generally improve with more interventional data, although SDCD is the only method to do so consistently. We find that SDCD performs the best in most scenarios, particularly on sparser graphs. We characterize the edge density of a graph, $\delta$, as the ratio of true edges to the maximum number of edges possible in a DAG. 

\subsection{Ablation Experiments}
We performed ablation studies to judge the impact of each innovation implemented in SDCD. We evaluated modifications of SDCD where (1) only stage 2 is performed without stage 1 and where (2) in stage 2, the spectral constraint is substituted for alternative PST acyclicity constraints. As the results show in \Cref{app:tab:empirical:ablation,app:tab:empirical:ablation2}, both stages are essential to the success of SDCD.

\subsection{Experiment against the best baseline}
In Supplementary~\Cref{app:tab:empirical:dcdi} we report the results of SDCD on the simulated data presented in \cite{brouillard2020differentiable} alongside their original DCDI results. SDCD outperforms DCDI on all its sparse datasets ($s=1$). Only for datasets where $d=10, s=4$, does SDCD perform worse than DCDI. However, we find the edge density ($\delta = 88.9\%$) of these graphs to be unrepresentative of realistic scenarios.

\section{Conclusion}
With SDCD, we addressed the limitations of existing DCD methods by applying an acyclicity constraint and a two-stage procedure that each promotes stability.
We show it improves in all regimes and can scale to thousands of variables, enabling new applications for DCD in data-rich settings.

Future work could aim to provide a deeper theoretical understanding of the impact of the acyclicity constraint's stability on gradient-based optimization, particularly how the constraint's non-convexity affects training.

\section*{Acknowledgments}
A.N. was supported by funding from the Eric and Wendy Schmidt Center at the Broad Institute of MIT and Harvard, and the Africk Family Fund. J.H. was supported by grant number 2022-253560 from the Chan Zuckerberg Initiative DAF, an advised fund of the Silicon Valley Community Foundation, and the Irving Institute for Cancer Dynamics. E.A. was supported by the National Institute of Health (NIH) NCI grant R00CA230195 and NHGRI grant R01HG012875. D.B. was funded by NSF 2127869, NSF 2311108, ONR N00014-17-1-2131, ONR N00014-15-1-2209, the Simons Foundation, and Open Philanthropy.

\section*{Impact Statement}
This paper presents work whose goal is to advance the field of Machine Learning. There are many potential societal consequences of our work, none of which we feel must be specifically highlighted here.

\bibliography{main}
\bibliographystyle{icml2024}

\newpage
\appendix
\onecolumn
\icmltitle{Supplementary Materials: Stable Differentiable Causal Discovery}

\section{Theoretical Results}
\label{app:sec:proofs}
\subsection{Proof of \Cref{th:pst-constraint}}
\label{app:proof:pst-constraint}
Before proving \Cref{th:pst-constraint}, we precisely define an acyclic matrix and prove a few lemmas.

\begin{definition}[Cyclic and acyclic matrices]
\label{app:proof:pst-constrain:definition-cyclic}
    Take a matrix $A \in \R_{\geq 0}^{d \times d}$.
    
    We say that $A$ has a \emph{cycle of length $k$} if and only if:
    \begin{equation}
        \exists (i_0, ..., i_{k}) \in \bbrackets{1,k}^{k+1}, ~\textnormal{ such that, } \left\{ \begin{array}{l}
             i_0 = i_k  \\
             \forall \ell \in \bbrackets{1,k}, A_{i_{\ell - 1}, i_\ell} > 0
        \end{array}
        \right.
    \end{equation}

    We say that $A$ is \emph{cyclic} if it contains at least one cycle. We note that if $A$ contains a cycle of length $k$ for $k \in \N^*$ (the set of strictly positive integers), then $A$ also contains a cycle of length $k'$ for $k' \in \bbrackets{1,d}$ (this follows from the pigeon hole principle).

    We say that $A$ is \emph{acyclic} if it does not contain any cycle (or equivalently if it does not contain any cycle of length $k \leq d$).  
\end{definition}
\begin{lemma}
\label{app:proof:pst-constraint:lemma-cycles}
    For any matrix $A \in \R^{d \times d}_{\geq 0}$, 
    \begin{itemize}
        \item $\tr{A^k} \geq 0$ for any $k$
        \item $A$ has a cycle of length $k$ if and only if $\tr{A^k} > 0$
        \item $A^d = 0$ if and only if $A$ is acyclic.
    \end{itemize}
\end{lemma}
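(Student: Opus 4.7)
The plan is to reduce all three bullets to the combinatorial identity
\[
(A^k)_{ij} \;=\; \sum_{i_0=i,\,i_1,\dots,i_{k-1},\,i_k=j} \; \prod_{\ell=1}^{k} A_{i_{\ell-1},\,i_\ell},
\]
i.e. $(A^k)_{ij}$ is the sum, over all length-$k$ walks from $i$ to $j$, of the product of edge weights along the walk. Because $A \in \R_{\geq 0}^{d\times d}$, every summand is non-negative. The first bullet is then immediate: $\tr{A^k} = \sum_i (A^k)_{ii}$ is a sum of non-negative quantities.

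For the second bullet, I would observe that a summand in $(A^k)_{ii}$ is strictly positive iff all factors $A_{i_{\ell-1},i_\ell}$ are strictly positive along some closed walk $i = i_0 \to i_1 \to \cdots \to i_k = i$. By \Cref{app:proof:pst-constrain:definition-cyclic}, such a walk is exactly a cycle of length $k$. Thus $\tr{A^k} > 0$ iff at least one index $i$ admits such a closed walk, iff $A$ has a cycle of length $k$.

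For the third bullet I would argue in two directions. The $(\Leftarrow)$ direction is the pigeonhole step, which I expect to be the main (though still routine) obstacle: assume $A$ is acyclic and suppose for contradiction that $(A^d)_{ij} > 0$ for some $i,j$. Then some walk $(i_0,\dots,i_d)$ with all positive edge weights exists; since it uses $d+1$ indices drawn from $\bbrackets{1,d}$, two indices coincide, say $i_p = i_q$ with $p<q \leq d$, so $(i_p,\dots,i_q)$ is a cycle of length $q-p \leq d$ with strictly positive edge weights, contradicting acyclicity. Hence $A^d = 0$. For the $(\Rightarrow)$ direction, I would use that $A^d = 0$ implies $A^m = 0$ and hence $\tr{A^m} = 0$ for every $m \geq d$. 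If $A$ had a cycle of length $k \leq d$, traversing it $n$ times (any $n$ with $nk \geq d$) yields by the second bullet $\tr{A^{nk}} > 0$, a contradiction. Combined with the remark in \Cref{app:proof:pst-constrain:definition-cyclic} that any cycle forces one of length $\leq d$, this closes the equivalence.

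The arguments are elementary and the only place requiring a small amount of care is making sure the pigeonhole and the ``raise to a common multiple'' steps bridge walks of length exactly $d$ with cycles of length at most $d$; once the combinatorial reading of $(A^k)_{ij}$ is in place, everything follows in a few lines.
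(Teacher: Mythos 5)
Your proposal is correct and follows essentially the same route as the paper: the walk-counting identity for $(A^k)_{ij}$, non-negativity of the summands, and the pigeonhole argument for extracting a cycle from a length-$d$ walk. The only (cosmetic) difference is in the direction ``cyclic $\Rightarrow A^d \neq 0$'': you pass through $\tr{A^{nk}}>0$ for a multiple $nk \geq d$ and the fact that $A^d=0$ forces $A^m=0$ for all $m\geq d$, whereas the paper extends the cycle to a walk of length exactly $d$ and exhibits a strictly positive entry of $A^d$ directly; both are valid.
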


\begin{proof}
    Fix a matrix $A \in \R^{d \times d}_{\geq 0}$.
    We have, \begin{equation}
        \tr{A^k} = \sum_{\substack{(i_0, ..., i_k) \in \bbrackets{1,d}^{k+1} \\ i_0 = i_k = i}} \prod_{\ell=1}^k A_{\ell -1, \ell}.
    \end{equation}
    Each addend is non-negative so $\tr{A^k} \geq 0$. Furthermore, the total sum is strictly positive if and only if at least one addend is strictly positive. This happens if and only if $A$ has a cycle of length $k$ by definition.

    Similarly, we have \begin{align}
        (A^d)_{i,j} &= \sum_{\substack{(i_0, ..., i_d) \in \bbrackets{1,d}^{d+1} \\ i_0 = i_d = j}} \prod_{\ell=1}^k A_{\ell -1, \ell}.
    \end{align}
    If $A^d_{i,j} > 0$, then one addend is strictly positive and so there exists $(i_0, ..., i_d) \in \bbrackets{1,d}^{d+1}$ such that $i_0 = i_d = j$ and $\prod_{\ell=1}^k A_{\ell -1, \ell} >0$. By the pigeon-hole principle, two $i_\ell$ are identical, which provides a cycle. Reciprocally, if $(i_0, ..., i_k)$ is a cycle of length $k$, then by repeating $(i_0, ..., i_k, i_1, i_2 ...i_{d \mod k})$ until having a path of length $d+1$, we have that $(A^d)_{i_0, i_{d \mod k}} > 0$. Hence, $A^d = 0$ if and only if $A$ is acyclic.
\end{proof}

We recall \Cref{th:pst-constraint}.

\thPstConstraint*

\begin{proof}
    Fix a matrix $A \in \R^{d \times d}_{\geq 0}$ and a sequence $(a_k)_{k \in \N^*} \in \R^{\N^*}_{\geq 0}$ such that $a_k > 0$ for any $k \in \bbrackets{1, d}$.

    By definition, we have,
    \begin{align}
        h_a(A) &= \tr{\sum_{k=1}^{+\infty} a_k A^k}\\
        &= \sum_{k=1}^{+\infty}a_k  \tr{A^k}\\
    \end{align}
    \begin{enumerate}
        \item By \Cref{app:proof:pst-constraint:lemma-cycles}, $\tr{A^k} \geq 0$ and so $h_a(A) \geq 0$. This proves the second property.
        \item Then, $h_a(A) = 0$ if and only if $\tr{A^k} = 0$ for all $k$ for which $a_k > 0$. Since $a_k > 0$ for any $k \in \bbrackets{1, d}$ we conclude that if $h_a(A) = 0$, then $A$ does not contain cycles of length $k \leq d$, so $A$ is acyclic by \Cref{app:proof:pst-constrain:definition-cyclic}. Reciprocally, if $A$ is acyclic, it does not contain cycles of any length, so $h_a(A) = 0$.
        \item Finally, if we write $r_a$ the radius of convergence of $f_a$, then $h_a$ converge absolutely over the set of matrices with $h_\rho(A) < r_a$ so it is differentiable with gradient given by: $\nabla h_a(A) = \sum_{k=1}^{+\infty} a_k k (A^{\top})^{k-1} $.
    \end{enumerate}
    This concludes the proof.
    
\end{proof}

\subsection{Proof of Theorem 2}
\label{app:proof:pst-unstable}
We recall \Cref{th:pst-unstable}.

\thPstUnstable*
\begin{proof}
    Take a PST constraint $h_a$ for some $(a_k)_k \in \R^{d\times d}_{\geq 0}$ with $a_k > 0$ for $k \in \bbrackets{1,d}$.

    We will show the E-unstable and V-unstable results using a particular adjacency matrix $C$.

    Define $C$ as the adjacency matrix of the cycle $1 \rightarrow 2 \rightarrow ... \rightarrow d \rightarrow 1$ with edges weights of $1$. That is:
        \begin{equation}
            C = \begin{bmatrix}
                0 & 1 & 0 & \hdots &\hdots & 0 \\
                  & 0 & 1 & 0 & \hdots & 0 \\
                \vdots&&\ddots&\ddots&\ddots& \vdots \\
                &\vdots&&\ddots&\ddots& 0 \\
                0 &&&&\ddots& 1 \\
                1 & 0 &&\hdots && 0
            \end{bmatrix}
        \end{equation}
    We have $C^d = I_d$ and $\tr{C^k} = \left \{ \begin{array}{ll}
        d & \textnormal{ if } k = 0 \mod d \\
        0 &  \textnormal{ if } k \neq 0 \mod d
    \end{array}\right.$.
    
    We obtain for any $w \in \R_{\geq 0}$,
    \begin{equation}
        h_a(wC) = d \sum_{\ell = 1}^{+\infty} a_{\ell d} w^{\ell d}.
    \end{equation}
    \begin{itemize}
        \item In particular, we have for any $s \geq 0$, $h_a(wC) = d a_{d} s^{d} = \Omega_{s \rightarrow +\infty} s^d$ (since $a_d > 0$). This proves the E-instability.
        \item Define $u = \min(1, r_a/2)$ where $r_a$ is the radius of convergence of $f_a$. 
        
        Then, for any $\varepsilon \in [0, u^2]$,

        \begin{align}
            h_a(\varepsilon C) &= d \sum_{\ell = 1}^{+\infty} a_{\ell d} \varepsilon^{\ell d} \\
            &= \varepsilon^d d \parens{\sum_{\ell = 1}^{+\infty} a_{\ell d} \varepsilon^{(\ell - 1) d}} \\
            &\leq \varepsilon^d d \parens{\sum_{\ell = 1}^{+\infty} a_{\ell d} u^{2 (\ell - 1) d}} \label{eq:pst-unstable-1}\\
            &\leq \varepsilon^d d \parens{\sum_{\ell = 1}^{+\infty} a_{\ell d} u^{\ell d} + a_d} \label{eq:pst-unstable-2} \\
            &\leq \varepsilon^d d \parens{f_a(u C) + a_d} \label{eq:pst-unstable-3}. \\
            &= O_{\varepsilon \rightarrow 0^+} (\varepsilon^d)
        \end{align}
    Where we obtain \Cref{eq:pst-unstable-2} by noting that $2(\ell - 1) \geq \ell$ and $u \leq 1$. Finally, since $u < r_a$, $f_a(uC)$ is finite. Hence the result.
    \end{itemize}

    The D-instability result follows from the definition of the radius of convergence. 
\end{proof}

\subsection{Proof of Theorem 3}
\label{app:proof:spectral-constraint}
We recall \Cref{th:constraint-spectral-valid}. 
\spectralAcyclicity*

The two properties stated in \Cref{th:constraint-spectral-valid} are standard results.

\begin{proof}~\\
    \begin{itemize}
        \item We provide proof for the statement $h_\rho(A) = 0 \Leftrightarrow A \textnormal{ is a DAG }$ for the sake of completeness.
        \begin{itemize}
            \item[$\Rightarrow$] If $h_{\rho}(A) = 0$ then all eigenvalues $\lambda_j(A)$ are zeros. But since $\tr{A^k} = \sum_{j=1}^d \lambda_j(A)^k$, we have $\tr{A^k} = 0$ for any $k \geq 1$ and by \Cref{app:proof:pst-constraint:lemma-cycles}, $A$ is acyclic.
            \item[$\Leftarrow$] Assume $A$ is acyclic, then $A^d = 0$ by \Cref{app:proof:pst-constraint:lemma-cycles}. But then all eigenvalues are 0 (as for eigenvalue $\lambda_j(A)$ and associated eigenvector $v_j(A)$, we have $A^d v_j(A) = \lambda_j(A)^d v_j(A) = 0$.
        \end{itemize}
    Hence, $h_\rho$ is a valid acyclicity constraint.

        \item \citet{magnus1985differentiating} shows that $h_\rho$ is differentiable at every $A$ that has mutually distinct eigenvalues, with the formula provided in \Cref{th:constraint-spectral-stable}. The set of matrices with all distinct eigenvalues is dense in the set of matrices \citep{horn2012matrix}[Theorem 2.4.7.1], which proves the result 
    \end{itemize}
\end{proof}

\label{app:proof:spectral-constraint-corollary}
\subsection{Proof of Theorem 4}
\label{app:proof:spectral-stable}
We recall \Cref{th:constraint-spectral-stable}.
\spectralStable*

\begin{proof} We prove each stability criterion.

    \begin{itemize}
        \item \textbf{E-stable}: For any $s > 0$ and matrix $A$, $h_\rho(sA) = |s| h_\rho(A) = O_{s\rightarrow +\infty }(s)$.
        \item \textbf{V-stable}:For any $\varepsilon > 0$ and matrix $A$ such that $h_{\rho}(A) > 0$, $h_\rho(\varepsilon A) = |\varepsilon| h_\rho(A) = \Omega_{\varepsilon \rightarrow 0^+ }(\varepsilon)$.
        \item \textbf{D-stable}: Every matrix has eigenvalues ($\mathbb{C}$ is algebraically closed), so $h_\rho$ is well defined everywhere. In addition, \Cref{th:constraint-spectral-valid} proved that $h_\rho$ was differentiable almost everywhere. 
    \end{itemize}
    Hence, $h_\rho$ is a stable constraint.
\end{proof}

\subsection{Proof for Stage 1}
\label{app:proof:stage-one}

In this section, we guarantee that if the optimization problem solved in stage 1 is solved exactly, without relaxation and with infinite data, then stage 1 does not remove any true causal parent.

The optimization problem solved in stage 1 is given in \Cref{eq:stage-1} as
\begin{equation*}
    \hat\theta_1 = \argmax_{\substack{\theta \\ \forall j, A_{\theta, jj} = 0}} S_{\alpha_1, \beta_1}(\theta) = \argmax_{\substack{\theta \\ \forall j, A_{\theta, jj} = 0}} \frac{1}{n} \sum\limits_{i=1}^n \log p(x^i ; \theta, t^i) - \alpha_1 \|A_\theta\|_1 - \beta_2 \|\theta\|_2^2.
\end{equation*}

With infinite data $x^i | t^i \sim p^*(x^i; t^i)$, the optimization problem writes,
\begin{equation}
\displaystyle%
    \tilde \theta_1 = \argmax_{\substack{\theta \\ \forall j, A_{\theta, jj} = 0}} \sum\limits_{\substack{k=0}}^K \pi_k \underset{p^{*} (x;k)}{\mathbb{E}}\left[\log p(x; \theta, k)\right]  - \alpha_1 \|A_\theta\|_1 - \beta_2 \|\theta\|_2^2. 
    \label{app:eq:stage1:infinite}
\end{equation}
where $\pi_k$ is the proportion of data coming from intervention $k$. 

Furthermore, in its non-relaxed form, \Cref{app:eq:stage1:infinite} above writes
\begin{equation}\displaystyle%
    \tilde \theta_1 = \argmax_{\substack{\theta \\ \forall j, A_{\theta, jj} = 0}} \sum\limits_{\substack{k=0}}^K \pi_k \underset{p^{*} (x;k)}{\mathbb{E}}\left[\log p(x; \theta, k)\right]  - \lambda |A_\theta|, 
    \label{app:eq:stage1:infinite-nonrelaxed}
\end{equation}
where the L1 and L2 regularization are reverted back into the number of edges $|A_\theta|$ regularization (for some $\lambda > 0$).

Since we are interested in the graph induced by $\tilde \theta_1$, that we write $\tilde G_1 = A_{\tilde \theta_1}$, we can rewrite \Cref{app:eq:stage1:infinite-nonrelaxed} as
\begin{equation}\displaystyle%
    \tilde G_1 = \argmax_{\substack{G \\ \textnormal{without self-loops}} }\sup_{\substack{\theta\\ G = A_\theta}} \sum\limits_{\substack{k=0}}^K \pi_k \underset{p^{*}(x;k)}{\mathbb{E}}\left[\log p(x; \theta, k)\right]  - \lambda |G|, 
    \label{app:eq:stage1:infinite-nonrelaxed-graph}
\end{equation}

Finally, since there are no constraint over $G$ other than no self-loops, \Cref{app:eq:stage1:infinite-nonrelaxed-graph} can be solved as $d$ independent optimization problems, each one determining the parents of $j$ in the graph $\tilde G_1$,
\begin{equation}\displaystyle%
    \pa^{\tilde G_1}_j = \argmax_{\substack{S \subset \bbrackets{1,d} \backslash \{ j \}} }\sup_{\substack{\theta\\ S = \pa_j^{A_\theta}}} \sum\limits_{\substack{k=0}}^K \pi_k \underset{p^{*}(x;k)}{\mathbb{E}}\left[\log p_{j}(x_j | x_{-j}; \theta,k)\right]  - \lambda |S|, 
    \label{app:eq:stage1:infinite-nonrelaxed-parents}
\end{equation}

Furthermore, whenever $j \in I_k$, our model class has $p_j(x_j | x_{-j}, \theta, k) = p_j(x_j | \theta_{(j,k)}, k)$ --- we know we have perfect interventions and the interventions are known. So the $\theta_{(j,k)}$ is not related to the coordinates of $\theta$ that define $A_\theta$. That is to say, \cref{app:eq:stage1:infinite-nonrelaxed-parents} is equivalent to 
\begin{equation}\displaystyle%
    \pa^{\tilde G_1}_j = \argmax_{\substack{S \subset \bbrackets{1,d} \backslash \{ j \}} }\sup_{\substack{\theta\\ S = \pa_j^{A_\theta}}} \sum\limits_{\substack{k=0 \\ j \not \in I_k}}^K \pi_k \underset{p^{*}(x;k)}{\mathbb{E}}\left[\log p_{j}(x_j | x_{-j}; \theta,k)\right]  - \lambda |S|, 
    \label{app:eq:stage1:infinite-nonrelaxed-parents-bis}
\end{equation}

We recall \Cref{th:stage1}.

\theoremMarkovBoundary*

The assumptions are similar to the ones detailed in \citet{brouillard2020differentiable} to guarantee that differentiable causal discovery can identify causal graphs.

The assumptions are:
\begin{itemize}
    \item $\pi_0 > 0$ -- we observe some observational data,
    \item $\exists \theta, ~\textnormal{s.t.} ~ \forall k, p^*(\cdot~; k) = p(\cdot~; \theta, k)$ -- the model class can express the true model $p^*$,
    \item The observational distribution $p^*(x; 0)$ is \textit{faithful} to the graph $G^*$ (that is any edge in $G^*$ indeed result in a nonzero cause-and-effect relation in the distribution $p^*(x;0)$. See \citet{neapolitan2004learning} for more details.
    \item The true distributions $p^*(x; k)$ and any distribution of the model class $p(x; \theta, k)$ have strictly positive density $p^*(x; k) > 0$, $p(x; \theta, k) > 0$. This avoids technical difficulty when forming conditional distributions (e.g., $p^*(x_j | x_T; k)$).
    \item The expectations $\mathbb{E}_{p^*(x; k)}[ \log p^*(x; k) ]$ are well defined (they are finite). This enables us to consider the likelihood expectations in the first place.
    \item The regularization strength $\lambda$ is strictly positive and small enough (see the proof for how small).
\end{itemize}

\begin{proof}
Fix $j\in \bbrackets{1,j}$.

For clarity of notations, we rewrite \Cref{app:eq:stage1:infinite-nonrelaxed-parents-bis} as
\begin{equation}\displaystyle%
    \pa^{\tilde G_1}_j = \argmax_{\substack{S \subset \bbrackets{1,d} \backslash \{ j \}} }\sup_{\substack{\theta}} \sum\limits_{\substack{k=0 \\ j \not \in I_k}}^K \pi_k \underset{p^{*}(x;k)}{\mathbb{E}}\left[\log p_{j}(x_j | x_S; \theta,k)\right]  - \lambda |S|, 
    \label{app:eq:stage1:infinite-nonrelaxed-parents-ter}
\end{equation}
where the condition $S = \pa_j^{A_\theta}$ is fully captured by the notation $p_{j}(x_j | x_{S}; \theta,k)$.

Then, define 
\begin{equation}
    \psi(T) = \sup_{\theta} \sum_{\substack{k\\j \not\in I_k}} \E[p^*(x; k)]{\pi_k \log p_j(x_j | x_S; \theta, k)} - \lambda |S|.
\end{equation} 
Further, define $B = \bo^{G^*}_j$ to be the Markov boundary of node $j$ in the true causal graph $G^*$. 

We will show that $\psi(B) > \psi(T)$ for any other $T \subset \bbrackets{1,d}\backslash\curly{j}$.
    
    We compute,
    \begin{align}
        \psi(B) - \psi(T) &= \sup_{\theta} \sum_{\substack{k\\j \not\in I_k}} \pi_k \E[p^*(x; k)]{ \log p_j(x_j | x_B; \theta, k)}
        - \sup_{\theta} \sum_{\substack{k\\j \not\in I_k}} \pi_k \E[p^*(x; k)]{\log p_j(x_j ; x_T; \theta, k)} 
        \\&\quad- \lambda |B| + \lambda|T| \nonumber \\
        &= -\inf_{\theta} \sum_{\substack{k\\j \not\in I_k}} \pi_k \E[p^*(x_{-j} ; k)]{D_{KL}\parens{p_j^*(x_j | x_{-j}; k) ~\|~ p_j(x_j | x_B; \theta, k)}}  \label{eqn:big1}\\
        &\quad +\inf_{\theta} \sum_{\substack{k\\j \not\in I_k}} \pi_k  \E[p^*(x_{-j} ; k)]{D_{KL}\parens{p_j^*(x_j | x_{-j}; k) ~\|~ p_j(x_j ; x_T; \theta, k)}}  \nonumber
        \\&\quad- \lambda (|B| - |T|)\nonumber\\
        &= \inf_{\theta} \sum_{\substack{k\\j \not\in I_k}} \pi_k  \E[p^*(x_{-j} ; k)]{D_{KL}\parens{p_j^*(x_j | x_{-j}; k) ~\|~ p_j(x_j ; x_T; \theta, k)}}  \label{eqn:big2}
        \\&\quad+ \lambda (|T| - |B|).\nonumber
    \end{align}
    
    The line \ref{eqn:big1} comes from $\E[p^*(x; k)]{\log p_j(x_j | x_B; \theta, k)} = - \E[p^*(x_{-j}; k)]{D_{KL}\parens{p_j^*(x_j | x_{-j}; k) ~\|~ p_j(x_j | x_B; \theta, k)}} + \E[p^*(x; k)]{\log p_j^*(x_j | x_{-j}; k)}$ where we added and substracted the $\log p^{(k)}$ term (the $\mathbb{E}_{p^*(x; k)}$ is decomposed into $\mathbb{E}_{p^*(x_{-j}; k)}\mathbb{E}_{p^*(x_{j}; k)}$, where the second expectation is in the KL divergence). We use the assumption of strictly positive density here to define the conditional $p_j^*(x_j | x_{-j}; k)$ without technical difficulties. 
    
    The line \ref{eqn:big2} comes from the assumption of sufficient model class capacity and the definition of the Markov boundary. Indeed, we first have $p_j^*(x_j | x_{-j}; k) = p_j^*(x_j | x_{B}; k)$ by definition of the Markov boundary $B$, and since the model class is expressive enough, there exists $\theta$ such that $D_{KL}\parens{p_j^*(x_j | x_{-j}; k) ~\|~ p_j(x_j | x_B; \theta, k)} = 0$. 

    We further have:
    \begin{align}
        \psi(B) - \psi(T) &\geq \pi_0 \inf_{\theta} \E[p^*(x_{-j} ; 0)]{D_{KL}\parens{p_j^*(x_j | x_{B}; 0) ~\|~ p_j(x_j |  x_{T};  \theta, 0)}} + \lambda (|T| - |B|)
        \\
        &=  \pi_0 \E[p^*(x_{-j} ; 0)]{D_{KL}\parens{p^*_j(x_j | x_B; 0) ~\|~ p_j^*(x_j | x_T; 0)}} +  \pi_0\inf_{\theta} \E[p^*(x; 0)]{\log \frac{p^*_j(x_j | x_T; 0)}{p_j(x_j | x_{T};  \theta, 0)}} \label{eqn:big3} \\&\quad+ \lambda (|T| - |B|)
        \nonumber \\
        &\geq \underbrace{ \pi_0 \E[p^*(x_{-j} ; 0)]{D_{KL}\parens{p^*_j(x_j | x_B; 0) ~\|~ p^*_j(x_j | x_T; 0)}}}_{\eta(T)} + \lambda (|T| - |B|). \label{eqn:big4}
    \end{align}
    where line \ref{eqn:big4} follows from $\E[p^*(x; 0)]{\log \frac{p^*_j(x_j | x_T; 0)}{p_j(x_j | x_{T};  \theta, 0)}} = \E[p^*(x_T)]{D_{KL}\parens{p_j^*(x_j | x_T) \| p_j(x_j | x_{T};  \theta, 0)}} \geq 0$.

    Let's finally define $u = \min\parens{ \curly{\frac{\eta(T)}{|B| - |T|} \mid T \subset \bbrackets{1,d} \backslash\{j\} \text{ and } \eta(T) > 0 \text{ and } |B| > |T|} \cup \{1\}}$ and fix any $\lambda \in ]0, u[$.
    
    Let's assume now that $\psi(T) \geq \psi(B)$ for some $T \subset \bbrackets{1,d} \backslash\{j\} $, and show that we obtain contradictions. 
    
    First, we would have  $\lambda (|B| - |T|) \geq \eta(T)$. In particular we deduce that $|B| \geq |T|$ (since $\eta(T) \geq 0$).

    Now, two possibilities:
    \begin{enumerate}
        \item If $\eta(T) > 0$, then $|B| > |T|$ and by definition of $\lambda$, $\lambda > \lambda$ which is absurd. 
        \item If $\eta(T) = 0$, then $\pi_0 \E[p^*(x_{-j} ; 0)]{D_{KL}\parens{p^*_j(x_j | x_B; 0) ~\|~ p^*_j(x_j | x_T; 0)}} = 0$. This implies that $D_{KL}\parens{p^*_j(x_j | x_B; 0) ~\|~ p^*_j(x_j | x_T; 0)} = 0$ for all $(x_{-j})$; since $p^*(x_{-j} ; 0)$ has positive density and $\pi_0 > 0$.
        Hence, the conditional $p^*_j(x_j | x_B; 0)$ and $p^*_j(x_j | x_T; 0)$ are identical. Since $B$ was the Markov boundary of $x_j$, that makes $T$ also a Markov blanket of $x_j$. But then, by minimality of the Markov boundary in a faithful graph, we have $B \subset T$. Remember that we had deduced $|B| \geq |T|$. So $B=T$.
    \end{enumerate}

    This ends the proof, where $\lambda \in ]0, u[$.
\end{proof}

\subsection{Proof for Stage 2}
\label{app:proof:stage-two}

Since stage 1 does not remove any true causal parents, theorem 1 of \citet{brouillard2020differentiable} remains valid. 

\subsection{Lemma: Asymptotic Bound on number of edges returned in Stage 1}
\label{app:proof:stage-one-lemma}
We denote the Markov boundary of $j$ in $G^*$ by $\textnormal{bo}_j^{G^*}$, and recall that $\textnormal{bo}_j^{G^*} = \textnormal{pa}^{G^*}_j \cup \textnormal{ch}^{G^*}_j \cup \textnormal{pa}^{G^*}_{\textnormal{ch}^{G^*}_j} \backslash \{j\}.$

The following lemma upper-bounds the theoretical number of edges returned by stage 1 when each node has at most $k$ parents. 

\begin{lemma}
    Assume $G^*$ is sparse such that each node has at most $k$ parents. Then, the total size of all the Markov boundaries is upper-bounded by $dk(k+2) = O(dk^2)$.
\end{lemma}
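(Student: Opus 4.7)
The plan is to bound $\sum_{j=1}^d |\textnormal{bo}_j^{G^*}|$ by splitting the Markov boundary into its three canonical pieces -- parents, children, and spouses (parents of children) -- and bounding each contribution separately using only the assumption that each node has at most $k$ parents. Using the union bound on the definition, I would first write
\begin{equation*}
\sum_{j=1}^d |\textnormal{bo}_j^{G^*}| \;\leq\; \sum_{j=1}^d |\textnormal{pa}^{G^*}_j| \;+\; \sum_{j=1}^d |\textnormal{ch}^{G^*}_j| \;+\; \sum_{j=1}^d \Bigl|\bigcup_{c \in \textnormal{ch}^{G^*}_j}\textnormal{pa}^{G^*}_c\Bigr|,
\end{equation*}
and then handle the three terms in turn.

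The first term is immediate: by hypothesis $|\textnormal{pa}^{G^*}_j|\leq k$, so $\sum_j|\textnormal{pa}^{G^*}_j|\leq dk$. The second term follows from the standard double-counting identity for edges in a directed graph, $\sum_j|\textnormal{ch}^{G^*}_j|=\sum_j|\textnormal{pa}^{G^*}_j|\leq dk$. The interesting term is the spouses' term, which I would bound by a two-step double counting. First, using $|\bigcup_{c\in\textnormal{ch}_j}\textnormal{pa}_c|\leq\sum_{c\in\textnormal{ch}_j}|\textnormal{pa}_c|$, I swap the order of summation: each child $c$ is counted once per parent $j\in\textnormal{pa}_c$, so
\begin{equation*}
\sum_{j=1}^d \sum_{c\in\textnormal{ch}^{G^*}_j}|\textnormal{pa}^{G^*}_c| \;=\; \sum_{c=1}^d |\textnormal{pa}^{G^*}_c|^2 \;\leq\; k\sum_{c=1}^d |\textnormal{pa}^{G^*}_c| \;\leq\; dk^2,
\end{equation*}
where the penultimate inequality uses $|\textnormal{pa}^{G^*}_c|\leq k$ one last time.

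Adding the three contributions gives $dk+dk+dk^2 = dk(k+2) = O(dk^2)$, which is the claim. The main (and only) subtle point is recognizing that the spouses' term should be controlled via the double-counting identity that rewrites it as $\sum_c|\textnormal{pa}_c|^2$; the naive bound $d\cdot k\cdot k$ per node would also give $O(dk^2)$ but with a worse constant and without illuminating why the bound holds. No further assumptions about the structure of $G^*$ beyond the per-node parent cap are needed.
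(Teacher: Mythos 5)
Your proof is correct and follows essentially the same route as the paper: the same decomposition of each Markov boundary into parents, children, and spouses, the bound $|E|\leq dk$ for the first two sums, and the same swap of summation turning the spouses' term into $\sum_c |\textnormal{pa}^{G^*}_c|^2 \leq dk^2$. Your write-up is in fact slightly cleaner than the paper's (which has an index typo in the swapped double sum), but there is no substantive difference.
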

\begin{proof}
First, note that if each node has at most $k$ parents, then $|E| \leq dk$. Finally,
    \begin{align}
        \sum_{j \in V} |\textnormal{bo}^{G^*}_j| &= \sum_{j \in V} |\textnormal{bo}^{G^*}_j| \\
        &\leq \sum_{j \in V} |\textnormal{pa}^{G^*}_j| + |\textnormal{ch}^{G^*}_j| + |\textnormal{pa}^{G^*}_{\textnormal{ch}^{G^*}_j}| \\
        &\leq |E| + |E| + \sum_{j \in V} \sum_{k \in \textnormal{ch}^{G^*}_j}|\textnormal{pa}^{G^*}_{k}| \\
        &\leq 2kd + \sum_{k \in V} \sum_{j \in \textnormal{pa}^{G^*}_j}|\textnormal{pa}^{G^*}_{k}| \\
        &\leq 2kd + dk^2
    \end{align}
\end{proof}

\newpage
\section{Methods}
\label{app:sec:methods}
\subsection{Model Details}
\label{app:sdcd-model}

In SDCD, the conditional distributions, $p_j(x_j | x_{-j}; \theta, k)$, are modeled as Gaussian distributions where the mean and variance are learned by a neural network that takes in all of the other $x_{-j}$ as input. The initial layer of the network applies $d$ independent linear transformations followed by a sigmoid nonlinearity to the input and outputs $d$ hidden states of size 10. Each of the $d$ hidden states corresponds to the features then used to predict each variable. Each hidden state is fed into two linear layers: one to predict the mean parameter of the conditional and one to predict the variance parameter of the conditional. For the variance, a softplus operation is applied to the output of the linear layer to constrain the variance to be strictly positive. 

\subsection{Algorithm Details}
\label{app:sdcd-algo}

\subsubsection{Spectral Acyclicity Constraint Estimation}
As described in \Cref{th:constraint-spectral-valid}, the gradient of the spectral acyclicity constraint can be computed as $h_{\rho} ( A ) = v_d  u_d^\top / v_d^\top  u_d$, where $u_d, v_d$ are the right and left eigenvectors of $A$ respectively. Using the power iteration method, which involves a fixed number of matrix-vector multiplications, $u_d,v_d$ can be estimated in $O(d^2)$. Specifically, the updates are as follows:
$$
u_d^{(i+1)} := \frac{A^\top u_d^{(i)}}{\|u_d\|_2}, \; v_d^{(i+1)} := \frac{A v_d^{(i)}}{\|v_d\|_2} 
$$
where $u_d,v_d$ are initialized as $u_d^{(1)}, v_d^{(1)} := [\frac{1}{\sqrt{d}}, \dots, \frac{1}{\sqrt{d}}]$ at the very first epoch of SDCD. In our implementation, we use 15 iterations to estimate the spectral acyclicity constraint value.

Importantly, we re-use the estimates of $u_d$ and $v_d$ from one epoch to another, as we don't expect $A$ (and its eigenvectors) to change drastically. 

Hence, at each epoch, we initialize $u_d, v_d$ using their last epoch's value and perform 15 power iterations.

\subsubsection{SDCD Algorithm}
The SDCD algorithm follows a two-stage procedure. In the first stage, the coefficient of the spectral acyclicity constraint, $\gamma$, is fixed at zero. We use an Adam optimizer with a learning rate, $\eta_1$, specific to stage 1 to perform minibatch gradient-based optimization. The coefficients corresponding to the L1 and L2 penalties, $\alpha_1$ and $\beta_1$, respectively, are fixed throughout training. The stage 1 training loss is written as:
\begin{align*}
\mathcal{L}_1(X, \theta, \alpha_1, \beta_1) &= S_{\alpha_1,\beta_1}(\theta) \\
&= \frac{1}{n} \sum\limits_{i=1}^n \log p(x^i ; \theta, t^i) - \alpha_1 \|A_\theta\|_1 - \beta_1 \|\theta\|_2^2.
\end{align*}

To prevent the model from learning implicit self-loops, the weights corresponding to the predicted variable are masked out for every hidden state output by the initial neural network layer. Thus, the prediction of each variable is prevented from being a function of the same variable.

In interventional regimes, the log-likelihood terms corresponding to the prediction of intervened variables are zeroed out. The intervened variables do not have to be modeled as we assume perfect interventions.

Stage 1 is run for a fixed number of epochs. By default, stage 1 also has an early stopping mechanism that uses the reconstruction loss of a held-out validation set of data (sampled uniformly at random from the training set) as the early stopping metric. If the validation reconstruction loss does not achieve a new minimum after a given number of epochs, the stage 1 training loop is exited.

At the end of stage 1, the learned input layer weights are used to compute a set of removed edges, $\hat{R}$, for stage 2. Let $W \in \mathbb{R}^{d \times d \times 10}$ represent the input layer weights. Then, each value of the implicitly defined weighted adjacency matrix is computed as the L2 vector norm for the corresponding set of weights (i.e., $A_{\theta, i,j}:= \|W_{i,j,:}\|_2$). This weighted adjacency matrix is discretized with a fixed threshold, $\tau_1$, such that each edge, $(i,j)$, is removed if it falls below the threshold (i.e., $A_{\theta, i,j} < \tau_1$).

In stage 2, the spectral acyclicity constraint is introduced. Like stage 1, we use an Adam optimizer with learning rate, $\eta_2$, and perform minibatch gradient-based optimization. Once again, the L1 and L2 coefficients, $\alpha_2, \beta_2$, are fixed throughout training. Rather than a fixed $\gamma$, SDCD takes an increment value, $\gamma^+ \in \mathbb{R}^+$, determining the rate at which $\gamma$ increases every epoch. The training loss for stage 2 is as follows:

\begin{align*}
\mathcal{L}_2(X, \theta, \hat{R}, \alpha_2, \beta_2, \gamma) &= S_{\alpha_2, \beta_2}(\theta) - \gamma h_{\rho}(A_\theta) \\
&= \frac{1}{n} \sum\limits_{i=1}^n \log p(x^i ; \theta, t^i) - \alpha_2 \|A_\theta\|_1 - \beta_2 \|\theta\|_2^2 - \gamma h_\rho (A_\theta).
\end{align*}

The same masking strategy as in stage 1 is used to prevent self-loops in $A_\theta$. However, the input layer weights corresponding to edges $(i,j) \in \hat{R}$ are also masked.

Like before, the reconstruction loss terms corresponding to intervened variables are removed from the loss.

To reduce the sensitivity of stage 2 to the choice of $\gamma^+$ and to prevent the acyclicity constraint term from dominating the loss, the linear increment schedule is frozen when $A_\theta$ achieves a DAG at the final threshold, $\tau_2$. In practice, the DAG check is performed every 20 epochs. If the adjacency matrix returns to being cyclic throughout training, the $\gamma$ increment schedule restarts to increase from where it left off.

The early stopping metric is computed similarly to stage 1, but in stage 2, the early stopping can only kick in when $\gamma$ has been frozen. If the $\gamma$  schedule is resumed due to $A_\theta$ reintroducing a cycle, the early stopping is reset.

Lastly, once stage 2 is complete, $A_\theta$ is computed and thresholded according to a fixed threshold, $\tau_2$. All values exceeding the threshold (i.e., $A_\theta, i,j \ge \tau_2$) are considered edges in the final graph prediction.

The thresholded adjacency matrix may contain cycles if stage 2 runs to completion without hitting early stopping. To ensure a DAG, we follow a greedy edge selection procedure detailed in \Cref{alg:dag-trim}.

Pseudocode for a simplified SDCD algorithm (excludes $\gamma$ freezing and early stopping) is provided in \Cref{alg:sdcd}.

\subsubsection{Time and Space Complexity}
\label{app:sec:time_space_complexity}
The time complexity of each iteration of SDCD is $O(d^2)$. The forward pass in stage 1 can be computed in $O(d^2)$ time. On the other hand, each of the $d$ prediction problems can be computed independently. This allows for parallelizing the $d$ problems, each taking $O(d)$ time. Stage 2 also takes $O(d^2)$ time as the spectral acyclicity constraint and the forward pass both take $O(d^2)$ time to compute. Thus, the time complexity of each iteration in both stages is $O(d^2)$.

If the sparsity pattern of the underlying causal graph is known beforehand such that each variable has at most $k$ parents, we can further tighten the time complexity of SDCD. By \Cref{app:proof:stage-one-lemma}, we know the size of the set of remaining edges after stage 1 is $O(dk^2)$. Using sparse matrix multiplication, the spectral acyclicity constraint can be done in $O(dk^2)$, which is effectively linear in $d$ if $k\ll d$. However, this improvement only becomes significant when $d>10,000$ (from experiments not reported in this paper). 

The space complexity of the algorithm is $O(d^2)$, as the number of parameters in the input layer scale quadratically in the number of features.

\begin{algorithm}
\caption{SDCD}\label{alg:sdcd}
\begin{algorithmic}
\REQUIRE $\alpha_1, \alpha_2 \in \mathbb{R}^+, \beta_1, \beta_2 \in \mathbb{R}^+, \gamma^+ \in \mathbb{R}^+$, \\ \;\;\;\;\;\;\;\;\;\;\;\;\; $\tau_1, \tau_2 \in \mathbb{R}^+, \eta_1, \eta_2 \in \mathbb{R}^+, E_{1}, E_{2} \in \mathbb{Z}^+$
\STATE $A_\theta^{(0)} \gets \vec{0}^{G\times G}$
\STATE $\theta^{(0)}_{-A_\theta} \gets $ RandomGaussianInit()
\STATE $e \gets 0$
\WHILE{$e < E_{1}$}
    \STATE $\theta^{(e + 1)} := \mathrm{AdamUpdate}(\theta^{(e)}, \nabla \mathcal{L}_1(X, \theta^{(e)}, \alpha_1, \beta_1), \eta_1)$
    \STATE $e \gets e + 1$
\ENDWHILE

\STATE $\hat{R} := \mathrm{Threshold}(A_\theta^{(E_1)}, \tau_2)$

\STATE $A_\theta^{(E_1)} \gets \vec{0}^{G\times G}$
\STATE $\theta^{(E_1)}_{-A_\theta} \gets $ RandomGaussianInit()

\STATE $\gamma \gets 0$
\WHILE{$e < E_1 + E_{2}$}
    \STATE $\theta^{(e + 1)} := \mathrm{AdamUpdate}(\theta^{(e)}, \nabla \mathcal{L}_1(X, \theta^{(e)}, \hat{R}, \alpha_2, \beta_2, \gamma), \eta_2)$
    \STATE $\gamma \gets \gamma + \gamma^+$
    \STATE $e \gets e + 1$
\ENDWHILE
\\
\OUTPUT $\mathrm{DAGTrim}(A_\theta^{(E_{2})}, \tau_2)$

\end{algorithmic}
\end{algorithm}
\begin{algorithm}
\caption{DAGTrim}\label{alg:dag-trim}
\begin{algorithmic}
\REQUIRE $A_\theta \in \mathbb{R}^{D \times D}, \tau \in \mathbb{R}^+$
\STATE $E \leftarrow \emptyset$ \COMMENT{Initialize the set of final edges.}
\STATE $C \leftarrow [(i,j) \in \bbrackets{1,d}^2 \mid (A_{\theta, i,j} > \tau ]$ \COMMENT{Candidate edges above threshold $\tau$.} 
\STATE Sort $C$ by decreasing $A_{\theta, i, j}$.
\FOR{each $(i,j) \in C$}
\IF{the graph with edges $E \cup \{(i,j)\}$ is still acyclic}
\STATE $E \leftarrow E \cup \{(i,j)\}$ \COMMENT{We add the edge if it does not create a cycle.}
\ENDIF
\ENDFOR
\end{algorithmic}
\end{algorithm}

\newpage
\section{Empirical Studies Details}

\subsection{Simulation Details}
\label{app:empirical:simulations-details}
To judge the performance of SDCD against existing methods over both interventional and observational data, we generated simulated data according to the following procedure:
\begin{itemize}
    \item Draw a random undirected graph from the Erd\H{o}s-R\'enyi distribution.
    \item Convert the undirected graph into a DAG $G^*$ by setting the direction of each edge $i \to j$ if $\pi(i) < \pi(j)$, where $\pi$ is a random permutation of the nodes.
    \item Form $d$ possible sets of interventions that target one variable at a time: $I_j = \curly{j}$ and $I_0 = \emptyset$.
    \item Draw a set of random fully connected neural networks $\operatorname{MLP}^{(j)} : \R^{|\pa^{G^*}_j|} \rightarrow \R^{100} \rightarrow 1$, each one with one 100-dimensional hidden layer. Each neural network parametrizes the mean of the observational conditional distributions: $$p^*_j(x_j \mid x_{\pa^{G^*}_j}; 0) \sim \mathcal{N}\parens{\mu = \operatorname{MLP}^{(j)}(x_{\pa^{G^*}_j}), \sigma=0.5 }.$$
    \item For intervention distribution $k \geq 1$, perform a hard intervention on variable $k$ and set $$p^{*}_j(x_{k}; k) \sim \mathcal{N}(0, 0.1). $$
    \item Draw the data according to the model, with 10,000 observational samples and 500 extra interventional samples per target variable.
    \item Standardize the data.
\end{itemize}


We consider several values of $d$ to simulate different scenarios.

\subsection{Choice of Hyperparameters}
\label{app:empirical:hyperparameters}

We fixed the hyperparameters as follows: $\alpha_1 := 1\mathrm{e}{-2}, \beta_1 := 2\mathrm{e}{-4}, \eta_1 := 2\mathrm{e}{-3}, \tau_1 := 0.2, \alpha_2 := 5\mathrm{e}{-4}, \beta_2 := 5\mathrm{e}{-3}, \eta_2 := 1\mathrm{e}{-3}, \gamma^+ := 0.005, \tau_2 := 0.1$. We found that these selections worked well empirically across multiple simulated datasets and were used in all experiments without simulation-specific fine-tuning.

Each stage was run for 2000 epochs with a batch size of 256, and the validation loss was computed over a held-out fraction of the training dataset (20\% of the data) every 20 epochs for early stopping. In stage 2, the DAG check of the implicit adjacency matrix was performed every 20 epochs before the validation loss computation.

\subsection{Baseline Methods}
\label{app:empirical:benchmark}
Here, we provide details on the baseline methods and cite which implementations were used for the experiments. For DCDI and DCDFG, we used the implementations from \url{https://github.com/Genentech/dcdfg}, using the default parameters for optimization. For DCDFG, we used 10 modules in our benchmarks, as reported in the paper experiments. For GIES, we used the Python implementation from \url{https://github.com/juangamella/gies}, using the default parameters. For DAGMA, we used the original implementation from \url{https://github.com/kevinsbello/dagma} with the default parameters. For NOTEARS, we used the implementation from \url{https://github.com/xunzheng/notears}, and for NOBEARS, we used the implementation from \url{https://github.com/howchihlee/BNGPU}. For NOTEARS and NOBEARS, we found the default thresholds for determining the final adjacency matrix performed poorly or did not return a DAG, so for each of these baselines, we followed the same procedure described in \citet{lopez2022large}: we find the threshold that returns the largest possible DAG via binary search. sortnregress \citep{reisach2021beware} is a trivial baseline meant to ensure that the causal graph cannot be easily inferred from the variance pattern across the variables. For this baseline, we used the implementation in \url{https://github.com/Scriddie/Varsortability}.

\subsection{Robustness Checks}
\label{app:empirical:robustness}

Below, we discuss three categories of issues that commonly arise when evaluating causal discovery methods and address each issue with a diagnostic metric.

\parhead{Sparsity}
Particularly when the true causal graph is sparse, SHD may favor sparser predictions since, in the extreme case, the empty graph achieves an SHD equal to the number of true edges. To show the relative performance of the benchmarked methods with respect to this trivial solution, we indicate the number of true edges for each simulated setting in \Cref{fig:observational} and \Cref{fig:interventional}. We find that most methods outperform this baseline. Additionally, we report the F1 score and the recall of the predictions (see \Cref{app:fig:empirical:observational_recall,app:fig:empirical:observational-f1-cpdag}), two metrics that suffer when a method predicts many false negatives. We find that SDCD still outperforms other methods with these metrics.

\parhead{Identifiability} 
In settings with incomplete or no interventional data, the true causal graph may be unidentifiable, meaning multiple $\mathcal{I}$-Markov equivalent graphs can maximize the score \citep{brouillard2020differentiable}. Therefore, graphs in the same Markov equivalence class as the true causal graph may have positive SHD values despite being optimal with respect to the available data. As proposed in \citet{peters2014causal}, we also compute an adapted version of the SHD to compare the Markov equivalence class of the methods' results against the true Markov equivalence class instead of the graphs themselves. This metric, called SHD-CPDAG, is computed as the SHD between the completed partially directed acyclic graph (CPDAG) of the predicted graph and the CPDAG of the true graph. Unlike the regular SHD metric, this metric is zero if two graphs are in the same equivalence class.
We report it alongside SHD for our experiments in \Cref{app:fig:empirical:observational-f1-cpdag} to better represent the results in scenarios with an unidentifiable causal graph. We find very similar results.

\parhead{Simulation issues} 
As discussed in \citet{reisach2021beware}, certain simulation processes used for causal discovery benchmarking exhibit an issue where the order of the variables, when sorted by sample variance, reflects the true causal ordering of the graph. As a result, methods that exploit this phenomenon to accurately infer the causal graph may be misrepresented. To ensure that our simulation process does not suffer from this issue and that the methods are being properly evaluated, we take two complementary steps recommended in \citet{reisach2021beware}: 
(i) we standardize the data before being input into any of the evaluated methods so that no artificial sample variance information can be exploited, and (2) we include the trivial baseline, sortnregress \citep{reisach2021beware}, which is designed to exploit sample variance artifacts from a flawed simulation, and should be outperformed by an effective, scale-invariant algorithm. We find that sortnregress performs poorly, which confirms that our normalization scheme removes simulation artifacts, and we find that SDCD and its competing methods beat sortnregress by a wide margin.

\newpage
\section{Supplementary Figures and Tables}
\label{app:sec:figs-and-tables}

\label{app:sec:constraint-optimization-behaviors}
\begin{figure}[h]
    \centering
    \includegraphics[width=0.7\linewidth]{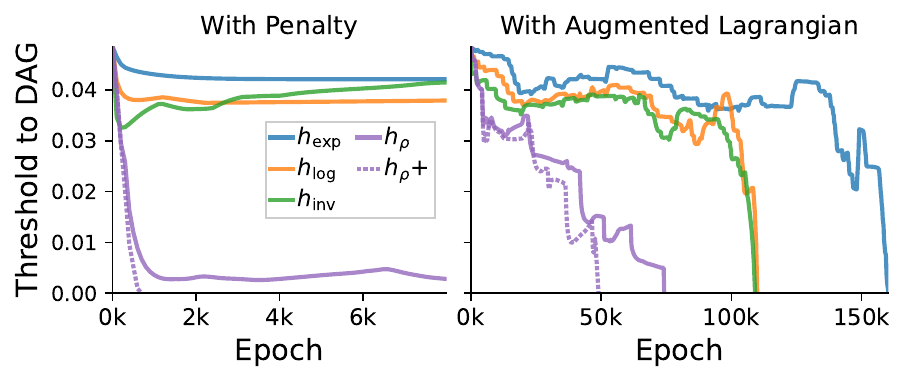}
    \caption{The effect of constraints on the learned graph throughout training. The training with penalty $h_\rho+$ (dashed purple, exactly $h_\rho$ with a hard mask on the diagonal as to prevent self-loops, as implemented in SDCD) converges the fastest toward a DAG. (left) training with $h$ as a regularization penalty. (right) training with $h$ as an augmented Lagrangian constraint. \textit{Threshold to DAG} is the smallest $\eta$ at which all edges with weight $> \eta$ form a DAG.}
    \label{app:fig:constraint-optimization-behaviors}
\end{figure}

\begin{table}[h]
    \centering
    \resizebox{0.6\columnwidth}{!}{%
    \begin{tabular}{llC{1.2cm}llll}
\toprule
s & d & $\delta$ & Method & SDCD & DCDI-G & DCDI-DSF \\
\midrule
\multirow[t]{6}{*}{1} & \multirow[t]{3}{*}{10} & \multirow[t]{3}{*}{22.2\%} & L & $\mathbf{0.7{\scriptstyle \pm1.2}}$ & $1.3{\scriptstyle \pm1.9}$ & $0.9{\scriptstyle \pm1.3}$ \\
 &  & & NL-Add & $\mathbf{0.6{\scriptstyle \pm0.7}}$ & $5.2{\scriptstyle \pm7.5}$ & $4.2{\scriptstyle \pm5.6}$ \\
 &  & & NL-NN & $\mathbf{0.7{\scriptstyle \pm0.7}}$ & $2.3{\scriptstyle \pm3.6}$ & $7.0{\scriptstyle \pm10.7}$ \\
\midrule
 & \multirow[t]{3}{*}{20} & \multirow[t]{3}{*}{10.5\%} & L & $\mathbf{1.4{\scriptstyle \pm3.4}}$ & $5.4{\scriptstyle \pm4.5}$ & $3.6{\scriptstyle \pm2.7}$ \\
 &  & & NL-Add & $\mathbf{4.1{\scriptstyle \pm3.0}}$ & $21.8{\scriptstyle \pm30.1}$ & $4.3{\scriptstyle \pm1.9}$ \\
 &  & & NL-NN & $\mathbf{3.0{\scriptstyle \pm2.5}}$ & $13.9{\scriptstyle \pm20.3}$ & $8.3{\scriptstyle \pm4.1}$ \\
\midrule
\multirow[t]{6}{*}{4} & \multirow[t]{3}{*}{10} & \multirow[t]{3}{*}{88.9\%} & L & $5.2{\scriptstyle \pm3.5}$ & $\mathbf{3.3{\scriptstyle \pm2.1}}$ & $3.7{\scriptstyle \pm2.3}$ \\
 &  & & NL-Add & $4.8{\scriptstyle \pm2.1}$ & $\mathbf{4.3{\scriptstyle \pm2.4}}$ & $5.5{\scriptstyle \pm2.4}$ \\
 &  & & NL-NN & $7.3{\scriptstyle \pm3.0}$ & $2.4{\scriptstyle \pm1.6}$ & $\mathbf{1.6{\scriptstyle \pm1.6}}$ \\
\midrule
 & \multirow[t]{3}{*}{20} & \multirow[t]{3}{*}{42.1\%} & L & $18.8{\scriptstyle \pm10.5}$ & $23.7{\scriptstyle \pm5.6}$ & $\mathbf{16.6{\scriptstyle \pm6.4}}$ \\
 &  & & NL-Add & $\mathbf{18.0{\scriptstyle \pm7.3}}$ & $35.2{\scriptstyle \pm13.2}$ & $26.7{\scriptstyle \pm16.9}$ \\
 &  & & NL-NN & $14.9{\scriptstyle \pm1.9}$ & $16.8{\scriptstyle \pm8.7}$ & $\mathbf{11.8{\scriptstyle \pm2.1}}$ \\
\bottomrule
\end{tabular}
}
    \caption{Means and standard deviations of SHD scores over simulations from \citet{brouillard2020differentiable}. The ``Method'' column refers to the model used to simulate the causal relationships. ``L'' refers to linear model, ``NL-Add'' refers to nonlinear, additive model, and ``NL-NN'' refers to nonlinear, non-additive (neural network) model. We refer to \citet{brouillard2020differentiable} for the simulation details. The results are reported alongside the values presented in the original paper. $s$ refers to the expected number of edges per node, $d$ denotes the number of nodes, and the edge density, $\delta$, is computed as the fraction of $\frac{d(d-1)}{2}$, the maximum number of edges for a DAG. The lowest average SHD values are set in bold.}
    \label{app:tab:empirical:dcdi}
\end{table} 
\clearpage

\label{app:fig:empirical:observational}

\begin{figure}[h]
    \centering
    \includegraphics[width=0.7\linewidth]{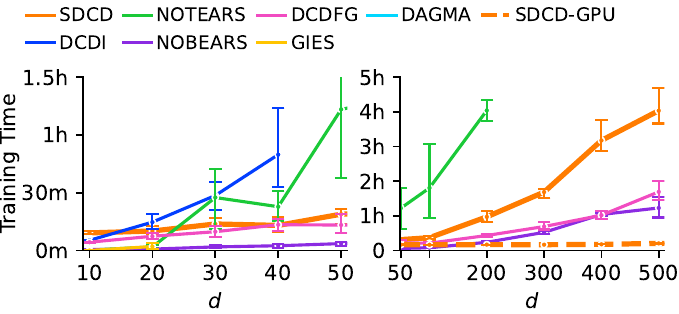}
    \caption{Training runtimes across simulations from \Cref{fig:observational}. SDCD on GPU (dashed) scales to 500 variables in under 20 minutes. Error bars indicate std on 5 random datasets for $d<50$ and 3 random datasets for $d\ge 50$.}
    \label{app:fig:empirical:runtime}
\end{figure}

{
\renewcommand{\arraystretch}{1}
\begin{table}[h]
    \centering
\resizebox{\columnwidth}{!}{%
\begin{tabular}{lllllllllllll}
\toprule
& SDCD & SDCD-GPU & DCDI & DCDFG & GIES & DAGMA & NOTEARS & NOBEARS & SCORE & sortnregress & AVICI & NOCURL \\
d & & & & & & & & & & & & \\
\midrule
10 & 14.7 ${\scriptstyle \pm 5.5}$ & NT & 24.3 ${\scriptstyle \pm 3.9}$ & 24.6 ${\scriptstyle \pm 6.0}$ & 27.8 ${\scriptstyle \pm 3.9}$ & 25.3 ${\scriptstyle \pm 6.2}$ & 35.3 ${\scriptstyle \pm 1.9}$ & 33.5 ${\scriptstyle \pm 3.0}$ & 14.4 ${\scriptstyle \pm 4.0}$ & 28.6 ${\scriptstyle \pm 4.7}$ & 23.52 & 33.23 \\
20 & 35.7 ${\scriptstyle \pm 6.2}$ & NT & 31.7 ${\scriptstyle \pm 6.5}$ & 108.0 ${\scriptstyle \pm 14.3}$ & 123.4 ${\scriptstyle \pm 12.3}$ & 62.0 ${\scriptstyle \pm 12.0}$ & 75.4 ${\scriptstyle \pm 4.0}$ & 74.2 ${\scriptstyle \pm 3.0}$ & 118.6 ${\scriptstyle \pm 8.5}$ & 81.6 ${\scriptstyle \pm 6.3}$ & 60.93 & 82.37 \\
30 & 53.8 ${\scriptstyle \pm 11.9}$ & NT & 55.5 ${\scriptstyle \pm 10.7}$ & 258.8 ${\scriptstyle \pm 32.6}$ & NA & 89.4 ${\scriptstyle \pm 10.9}$ & 113.1 ${\scriptstyle \pm 4.8}$ & 113.7 ${\scriptstyle \pm 3.3}$ & 275.9 ${\scriptstyle \pm 21.0}$ & 134.6 ${\scriptstyle \pm 8.4}$ & 97.13 & 134.60 \\
40 & 64.0 ${\scriptstyle \pm 13.7}$ & NT & 102.4 ${\scriptstyle \pm 21.6}$ & 426.6 ${\scriptstyle \pm 73.7}$ & NA & 115.3 ${\scriptstyle \pm 13.4}$ & 147.9 ${\scriptstyle \pm 6.0}$ & 151.1 ${\scriptstyle \pm 3.4}$ & 454.3 ${\scriptstyle \pm 52.8}$ & 172.9 ${\scriptstyle \pm 12.2}$ & 135.83 & 179.93 \\
50 & 69.9 ${\scriptstyle \pm 12.3}$ & 68.3 ${\scriptstyle \pm 13.3}$ & NA & 660.8 ${\scriptstyle \pm 126.1}$ & NA & NA & 183.4 ${\scriptstyle \pm 7.4}$ & 192.0 ${\scriptstyle \pm 3.5}$ & 619.4 ${\scriptstyle \pm 59.7}$ & 216.6 ${\scriptstyle \pm 12.4}$ & 170.83 & 240.93 \\
100 & 92.7 ${\scriptstyle \pm 9.1}$ & 89.7 ${\scriptstyle \pm 11.0}$ & NA & 1807.3 ${\scriptstyle \pm 788.2}$ & NA & NA & 327.3 ${\scriptstyle \pm 7.5}$ & 389.0 ${\scriptstyle \pm 3.6}$ & NA & 421.3 ${\scriptstyle \pm 12.0}$ & 366.50 & 513.07 \\
200 & 225.3 ${\scriptstyle \pm 13.7}$ & 228.0 ${\scriptstyle \pm 18.3}$ & NA & 5657.3 ${\scriptstyle \pm 2982.6}$ & NA & NA & 619.0 ${\scriptstyle \pm 4.2}$ & 770.0 ${\scriptstyle \pm 7.8}$ & NA & 824.0 ${\scriptstyle \pm 19.0}$ & NT & NT \\
300 & 350.0 ${\scriptstyle \pm 12.5}$ & 360.0 ${\scriptstyle \pm nan}$ & NA & 7284.7 ${\scriptstyle \pm 5072.3}$ & NA & NA & NA & 1149.0 ${\scriptstyle \pm 14.0}$ & NA & 1190.7 ${\scriptstyle \pm 26.3}$ & NT & NT \\
400 & 466.3 ${\scriptstyle \pm 62.4}$ & 471.7 ${\scriptstyle \pm 68.0}$ & NA & 3779.7 ${\scriptstyle \pm 507.3}$ & NA & NA & NA & 1534.7 ${\scriptstyle \pm 3.1}$ & NA & 1585.0 ${\scriptstyle \pm 59.3}$ & NT & NT \\
500 & 621.7 ${\scriptstyle \pm 10.7}$ & 621.0 ${\scriptstyle \pm 10.5}$ & NA & 7252.7 ${\scriptstyle \pm 3284.6}$ & NA & NA & NA & 1915.7 ${\scriptstyle \pm 18.8}$ & NA & 1974.3 ${\scriptstyle \pm 34.6}$ & NT & NT \\
\bottomrule
\end{tabular}}

    \caption{Detailed results of SHD means and standard deviations from \Cref{fig:observational}. SDCD-GPU was only run for $d\ge 50$. All other \texttt{NA} values correspond to failed runs (possibly from timeout after 6h, e.g., GIES, or from training error, e.g., DCDI). \texttt{NT} corresponds to the method not having been tested on that particular example.}
    \label{app:tab:empirical:observational}
\end{table}
}

\begin{figure}[h!]
    \centering
    \includegraphics[width=0.9\linewidth]{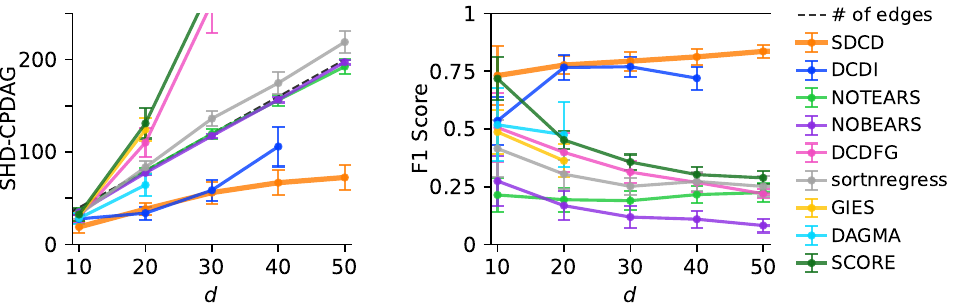}
    \caption{F1 and SHD-CPDAG metrics across simulations from \Cref{fig:observational}, observational data with increasing numbers of variables $d$. Missing data points imply the method failed to run.  Error bars indicate std on 30 random datasets.}
    \label{app:fig:empirical:observational-f1-cpdag}
\end{figure}

\clearpage

\begin{figure}[h]
    \centering
    \includegraphics[width=0.7\linewidth]{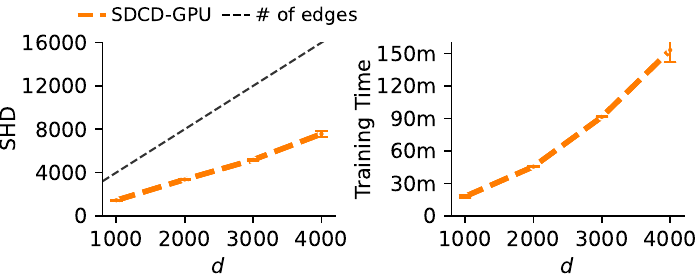}
    \caption{SDCD on GPU (dashed) scales to 4000 variables under 3 hours while maintaining competitive SHD. Error bars indicate std on 3 random datasets for $d=1000, 2000$ and 2 random datasets for $d=3000, 4000$.}
    \label{app:fig:empirical:observational-large}
\end{figure}

\begin{table}[h]
    \centering 
    \resizebox{0.25\columnwidth}{!}{
\begin{tabular}{ll}
\toprule
d &                           SDCD-GPU \\
\midrule
1000 &   1438.7 ${\scriptstyle \pm 59.2}$ \\
2000 &   3356.7 ${\scriptstyle \pm 70.0}$ \\
3000 &   5172.5 ${\scriptstyle \pm 89.8}$ \\
4000 &  7567.0 ${\scriptstyle \pm 343.7}$ \\
\bottomrule
\end{tabular}
}
    \caption{Detailed results of SHD means and standard deviations from \Cref{app:fig:empirical:observational-large}.}
    \label{app:tab:empirical:observational-large}
\end{table}

\clearpage

In addition to SHD, we computed precision and recall metrics over the predicted edges with respect to the true edges for both observational and interventional scenarios. The precision is the fraction of true edges among all the predicted edges. The recall is the fraction of true edges that have been correctly predicted.

\begin{figure}[h!]
    \centering
    \includegraphics[width=.7\linewidth]{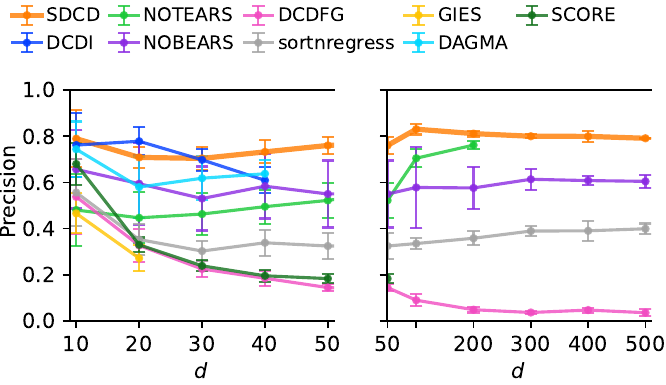}
    \caption{Precision across simulations from \Cref{fig:observational}, observational data with increasing numbers of variables $d$. The SDCD(-CPU) and SDCD-GPU lines overlap, indicating consistent results. Missing data points imply the method failed to run.  Error bars indicate std on 30 random datasets for $d<50$ and 5 random datasets for $d\ge 50$.}
    \label{app:fig:empirical:observational_precision}
\end{figure}

\begin{figure}[h!]
    \centering
    \includegraphics[width=.7\linewidth]{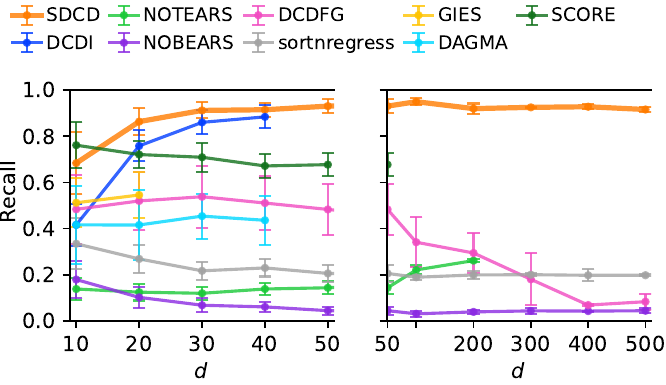}
    \caption{Recall across simulations from \Cref{fig:observational}, observational data with increasing numbers of variables $d$. The SDCD(-CPU) and SDCD-GPU lines overlap, indicating consistent results. Missing data points imply the method failed to run.  Error bars indicate std on 30 random datasets for $d<50$ and 5 random datasets for $d\ge 50$.}
    \label{app:fig:empirical:observational_recall}
\end{figure}
\clearpage

\begin{figure}[p]
    \centering\includegraphics[width=0.7\linewidth]{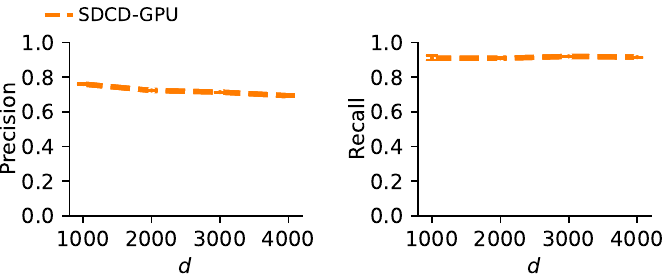}
    \caption{Precision and recall across simulations from \Cref{app:fig:empirical:observational-large}, observational data with increasing numbers of variables $d$. Error bars indicate std on 3 random datasets for $d=1000, 2000$ and 2 random datasets for $d=3000, 4000$.}
    \label{app:fig:empirical:observational-large-precision-recall}
\end{figure}
\clearpage

\begin{figure}[p]
    \centering
    \includegraphics[width=\linewidth]{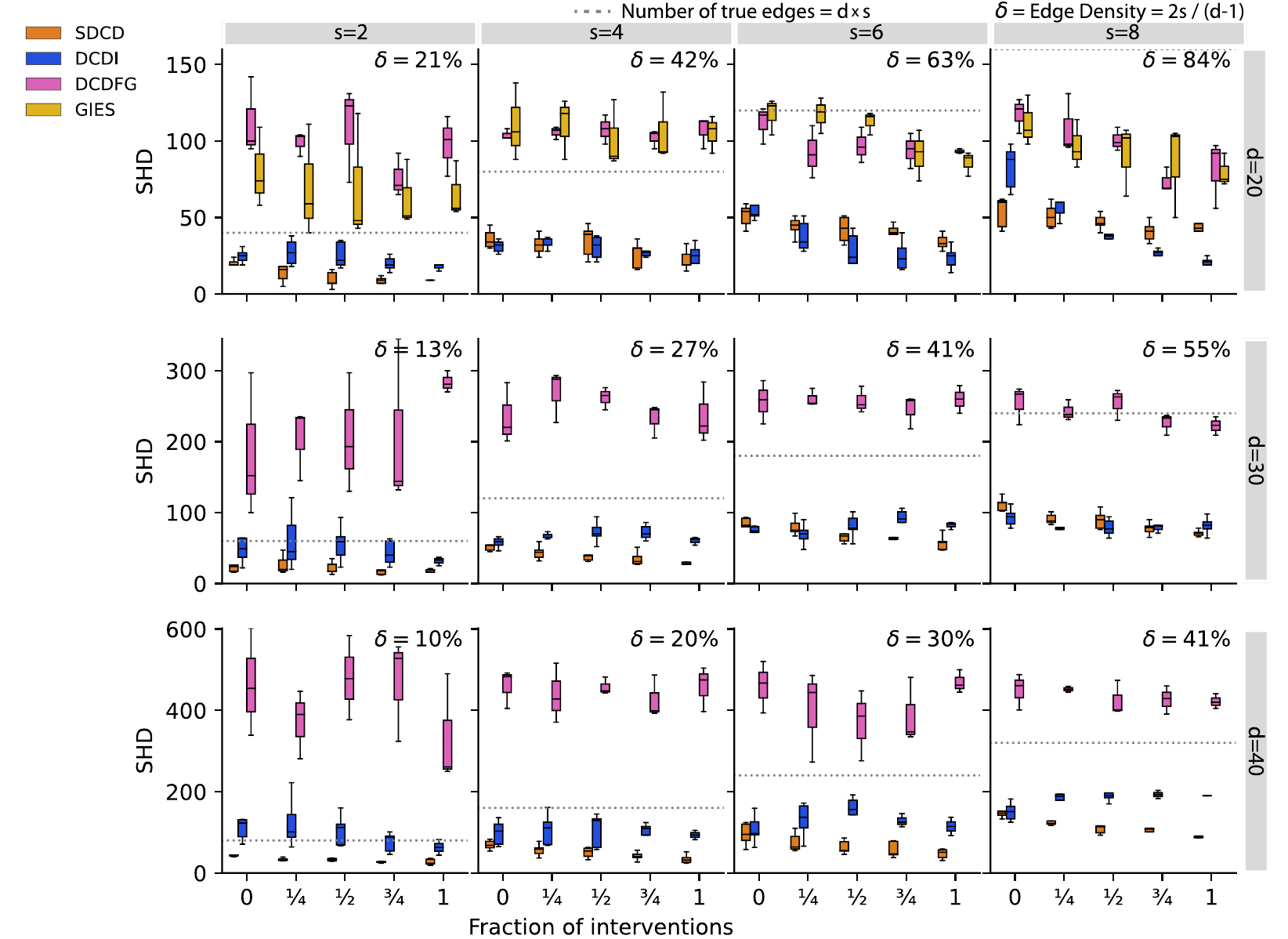}
    \caption{SHD across simulations with an increasing proportion of variables intervened on, varying the total number of variables $d$ (columns) and average edges per variable $s$ (rows). Extended version of \Cref{fig:interventional} with DCDFG and GIES and $s=8$.
    Boxplots over 5 random datasets.}
    \label{app:fig:empirical:interventional_full}
\end{figure}

\clearpage
{
\small
\begin{table}[p]
    \centering
    \resizebox{0.65\columnwidth}{!}{%
\begin{tabular}{lllC{3.5cm}llll}
\toprule
  s & d & $\delta$ & Fraction of Variables Intervened on &                             SDCD &                             DCDI &                             DCDFG &                             GIES \\
\midrule
  &    &      & 0.00 &    \textbf{18.0 ${\scriptstyle \pm 6.5}$} &    24.8 ${\scriptstyle \pm 4.6}$ &   112.3 ${\scriptstyle \pm 25.8}$ &   80.3 ${\scriptstyle \pm 26.1}$ \\
  &    &      & 0.25 &    \textbf{13.4 ${\scriptstyle \pm 5.7}$} &    27.4 ${\scriptstyle \pm 8.6}$ &     99.0 ${\scriptstyle \pm 7.8}$ &   70.0 ${\scriptstyle \pm 36.8}$ \\
2 & 20 & 21\% & 0.50 &     \textbf{9.4 ${\scriptstyle \pm 5.4}$} &    25.4 ${\scriptstyle \pm 8.5}$ &   109.0 ${\scriptstyle \pm 31.4}$ &   69.7 ${\scriptstyle \pm 41.9}$ \\
  &    &      & 0.75 &     \textbf{9.0 ${\scriptstyle \pm 2.1}$} &    19.8 ${\scriptstyle \pm 4.8}$ &    76.0 ${\scriptstyle \pm 14.2}$ &   62.7 ${\scriptstyle \pm 22.0}$ \\
  &    &      & 1.00 &     \textbf{9.0 ${\scriptstyle \pm 2.8}$} &    18.8 ${\scriptstyle \pm 3.3}$ &    98.0 ${\scriptstyle \pm 19.7}$ &   65.7 ${\scriptstyle \pm 18.5}$ \\
  \cline{2-8}
  &    &      & 0.00 &     \textbf{24.6 ${\scriptstyle \pm 9.6}$} &   56.0 ${\scriptstyle \pm 32.9}$ &  183.0 ${\scriptstyle \pm 102.1}$ &                               NA \\
  &    &      & 0.25 &   \textbf{26.8 ${\scriptstyle \pm 13.1}$} &   60.4 ${\scriptstyle \pm 40.9}$ &   204.3 ${\scriptstyle \pm 51.4}$ &                               NA \\
  & 30 & 13\% & 0.50 &    \textbf{21.8 ${\scriptstyle \pm 9.0}$} &   56.2 ${\scriptstyle \pm 26.6}$ &   206.7 ${\scriptstyle \pm 84.3}$ &                               NA \\
  &    &      & 0.75 &    \textbf{16.2 ${\scriptstyle \pm 3.4}$} &   43.2 ${\scriptstyle \pm 17.8}$ &  207.0 ${\scriptstyle \pm 119.7}$ &                               NA \\
  &    &      & 1.00 &    \textbf{18.2 ${\scriptstyle \pm 2.2}$} &    31.7 ${\scriptstyle \pm 6.1}$ &   283.7 ${\scriptstyle \pm 15.2}$ &                               NA \\
  \cline{2-8}
  &    &      & 0.00  &    \textbf{44.2 ${\scriptstyle \pm 4.1}$} &  109.2 ${\scriptstyle \pm 27.6}$ &  465.0 ${\scriptstyle \pm 131.8}$ &                               NA \\
  &    &      & 0.25 &    \textbf{33.4 ${\scriptstyle \pm 3.6}$} &  123.8 ${\scriptstyle \pm 62.1}$ &   372.7 ${\scriptstyle \pm 84.3}$ &                               NA \\
  & 40 & 10\% & 0.50 &    \textbf{33.0 ${\scriptstyle \pm 3.5}$} &  105.6 ${\scriptstyle \pm 38.8}$ &  479.7 ${\scriptstyle \pm 103.5}$ &                               NA \\
  &    &      & 0.75 &    \textbf{27.6 ${\scriptstyle \pm 3.4}$} &   76.0 ${\scriptstyle \pm 24.4}$ &  469.3 ${\scriptstyle \pm 126.6}$ &                               NA \\
  &    &      & 1.00 &    \textbf{27.0 ${\scriptstyle \pm 7.5}$} &   63.0 ${\scriptstyle \pm 26.9}$ &  333.7 ${\scriptstyle \pm 135.5}$ &                               NA \\
\midrule
  &    &      & 0.00 &    36.0 ${\scriptstyle \pm 6.4}$ &    \textbf{31.2 ${\scriptstyle \pm 4.1}$} &    104.0 ${\scriptstyle \pm 3.5}$ &  110.7 ${\scriptstyle \pm 25.3}$ \\
  &    &      & 0.25 &    \textbf{32.2 ${\scriptstyle \pm 6.6}$} &    33.0 ${\scriptstyle \pm 3.6}$ &    105.7 ${\scriptstyle \pm 4.2}$ &  110.7 ${\scriptstyle \pm 20.0}$ \\
4 & 20 & 42\% & 0.50 &   34.6 ${\scriptstyle \pm 10.6}$ &    \textbf{30.4 ${\scriptstyle \pm 7.6}$} &    107.7 ${\scriptstyle \pm 9.5}$ &  101.3 ${\scriptstyle \pm 22.3}$ \\
  &    &      & 0.75 &    \textbf{25.8 ${\scriptstyle \pm 8.8}$} &    29.6 ${\scriptstyle \pm 8.2}$ &    102.0 ${\scriptstyle \pm 6.1}$ &  105.7 ${\scriptstyle \pm 22.8}$ \\
  &    &      & 1.00 &    \textbf{22.4 ${\scriptstyle \pm 7.1}$} &    25.8 ${\scriptstyle \pm 6.4}$ &   107.0 ${\scriptstyle \pm 10.4}$ &  105.3 ${\scriptstyle \pm 12.2}$ \\
  \cline{2-8}
  &    &      & 0.00 &    \textbf{54.0 ${\scriptstyle \pm 9.8}$} &    57.6 ${\scriptstyle \pm 7.9}$ &   234.7 ${\scriptstyle \pm 42.9}$ &                               NA \\
  &    &      & 0.25 &   \textbf{43.8 ${\scriptstyle \pm 10.3}$} &    67.0 ${\scriptstyle \pm 4.0}$ &   269.3 ${\scriptstyle \pm 36.7}$ &                               NA \\
  & 30 & 27\% & 0.50 &    \textbf{39.2 ${\scriptstyle \pm 8.6}$} &   72.4 ${\scriptstyle \pm 15.5}$ &   262.0 ${\scriptstyle \pm 15.7}$ &                               NA \\
  &    &      & 0.75 &    \textbf{35.0 ${\scriptstyle \pm 9.9}$} &   72.2 ${\scriptstyle \pm 10.7}$ &   232.7 ${\scriptstyle \pm 24.0}$ &                               NA \\
  &    &      & 1.00 &    \textbf{29.0 ${\scriptstyle \pm 6.5}$} &    60.3 ${\scriptstyle \pm 5.7}$ &   236.0 ${\scriptstyle \pm 42.8}$ &                               NA \\
  \cline{2-8}
  &    &      & 0.00 &   \textbf{69.0 ${\scriptstyle \pm 11.7}$} &   99.0 ${\scriptstyle \pm 30.7}$ &   460.0 ${\scriptstyle \pm 47.8}$ &                               NA \\
  &    &      & 0.25 &   \textbf{56.8 ${\scriptstyle \pm 15.4}$} &  107.0 ${\scriptstyle \pm 39.2}$ &   438.3 ${\scriptstyle \pm 73.1}$ &                               NA \\
  & 40 & 20\% & 0.50 &   \textbf{50.4 ${\scriptstyle \pm 13.0}$} &  105.6 ${\scriptstyle \pm 41.6}$ &   457.7 ${\scriptstyle \pm 21.2}$ &                               NA \\
  &    &      & 0.75 &   \textbf{41.4 ${\scriptstyle \pm 10.7}$} &   97.8 ${\scriptstyle \pm 33.9}$ &   426.3 ${\scriptstyle \pm 52.6}$ &                               NA \\
  &    &      & 1.00 &   \textbf{34.4 ${\scriptstyle \pm 11.3}$} &   93.5 ${\scriptstyle \pm 16.3}$ &   458.7 ${\scriptstyle \pm 55.3}$ &                               NA \\
\midrule
  &    &      & 0.00 &    \textbf{51.2 ${\scriptstyle \pm 7.5}$} &   56.6 ${\scriptstyle \pm 10.4}$ &   112.0 ${\scriptstyle \pm 12.3}$ &  117.7 ${\scriptstyle \pm 11.9}$ \\
  &    &      & 0.25 &    44.0 ${\scriptstyle \pm 6.5}$ &   \textbf{37.8 ${\scriptstyle \pm 10.2}$} &    92.3 ${\scriptstyle \pm 17.0}$ &  117.3 ${\scriptstyle \pm 11.6}$ \\
6 & 20 & 63\% & 0.50 &    42.2 ${\scriptstyle \pm 8.6}$ &   \textbf{29.0 ${\scriptstyle \pm 10.8}$} &    97.0 ${\scriptstyle \pm 11.5}$ &   112.7 ${\scriptstyle \pm 7.6}$ \\
  &    &      & 0.75 &    38.8 ${\scriptstyle \pm 8.3}$ &   \textbf{25.2 ${\scriptstyle \pm 10.0}$} &    94.0 ${\scriptstyle \pm 11.5}$ &   91.3 ${\scriptstyle \pm 16.6}$ \\
  &    &      & 1.00 &    34.0 ${\scriptstyle \pm 5.1}$ &    \textbf{23.8 ${\scriptstyle \pm 7.7}$} &     93.3 ${\scriptstyle \pm 1.5}$ &    86.0 ${\scriptstyle \pm 7.9}$ \\
  \cline{2-8}
  &    &      & 0.00 &    85.4 ${\scriptstyle \pm 6.5}$ &    \textbf{75.8 ${\scriptstyle \pm 4.4}$} &   256.7 ${\scriptstyle \pm 30.6}$ &                               NA \\
  &    &      & 0.25 &   79.8 ${\scriptstyle \pm 12.5}$ &   \textbf{69.2 ${\scriptstyle \pm 15.4}$} &   260.7 ${\scriptstyle \pm 12.4}$ &                               NA \\
  & 30 & 41\% & 0.50 &  \textbf{69.4 ${\scriptstyle \pm 14.8}$} &   80.6 ${\scriptstyle \pm 17.2}$ &   257.3 ${\scriptstyle \pm 18.6}$ &                               NA \\
  &    &      & 0.75 &   \textbf{67.0 ${\scriptstyle \pm 12.2}$} &   86.2 ${\scriptstyle \pm 23.3}$ &   245.3 ${\scriptstyle \pm 23.7}$ &                               NA \\
  &    &      & 1.00 &   \textbf{57.4 ${\scriptstyle \pm 11.3}$} &    82.0 ${\scriptstyle \pm 5.3}$ &   259.7 ${\scriptstyle \pm 19.5}$ &                               NA \\
  \cline{2-8}
  &    &      & 0.00 &   \textbf{95.4 ${\scriptstyle \pm 27.7}$} &  107.8 ${\scriptstyle \pm 36.3}$ &   460.3 ${\scriptstyle \pm 63.3}$ &                               NA \\
  &    &      & 0.25 &   \textbf{75.6 ${\scriptstyle \pm 23.6}$} &  130.2 ${\scriptstyle \pm 43.3}$ &  401.0 ${\scriptstyle \pm 112.8}$ &                               NA \\
  & 40 & 30\% & 0.50 &   \textbf{63.6 ${\scriptstyle \pm 17.0}$} &  146.0 ${\scriptstyle \pm 51.7}$ &   370.0 ${\scriptstyle \pm 87.1}$ &                               NA \\
  &    &      & 0.75 &   \textbf{57.4 ${\scriptstyle \pm 19.6}$} &  128.3 ${\scriptstyle \pm 16.3}$ &   387.7 ${\scriptstyle \pm 81.1}$ &                               NA \\
  &    &      & 1.00 &   \textbf{47.2 ${\scriptstyle \pm 12.2}$} &  114.5 ${\scriptstyle \pm 31.8}$ &   469.0 ${\scriptstyle \pm 28.2}$ &                               NA \\
\midrule
  &    &      & 0.00 &   \textbf{53.6 ${\scriptstyle \pm 10.2}$} &   82.8 ${\scriptstyle \pm 14.5}$ &   117.7 ${\scriptstyle \pm 11.4}$ &  111.7 ${\scriptstyle \pm 16.5}$ \\
  &    &      & 0.25 &    \textbf{51.0 ${\scriptstyle \pm 8.1}$} &   58.2 ${\scriptstyle \pm 12.6}$ &   108.3 ${\scriptstyle \pm 19.7}$ &   96.7 ${\scriptstyle \pm 15.8}$ \\
8 & 20 & 84\% & 0.50 &    47.0 ${\scriptstyle \pm 5.3}$ &   \textbf{41.4 ${\scriptstyle \pm 13.8}$} &    100.7 ${\scriptstyle \pm 7.6}$ &   91.0 ${\scriptstyle \pm 23.5}$ \\
  &    &      & 0.75 &    40.8 ${\scriptstyle \pm 6.7}$ &    \textbf{26.0 ${\scriptstyle \pm 3.8}$} &     73.7 ${\scriptstyle \pm 8.1}$ &   86.0 ${\scriptstyle \pm 31.2}$ \\
  &    &      & 1.00 &    43.0 ${\scriptstyle \pm 9.0}$ &    \textbf{19.8 ${\scriptstyle \pm 4.9}$} &    81.7 ${\scriptstyle \pm 22.4}$ &   79.7 ${\scriptstyle \pm 10.8}$ \\
  \cline{2-8}
  &    &      & 0.00 &   111.8 ${\scriptstyle \pm 9.8}$ &   \textbf{93.4 ${\scriptstyle \pm 13.3}$} &   255.0 ${\scriptstyle \pm 27.1}$ &                               NA \\
  &    &      & 0.25 &    90.8 ${\scriptstyle \pm 7.5}$ &    \textbf{75.6 ${\scriptstyle \pm 8.2}$} &   242.7 ${\scriptstyle \pm 14.6}$ &                               NA \\
  & 30 & 55\% & 0.50 &   89.6 ${\scriptstyle \pm 13.2}$ &   \textbf{78.8 ${\scriptstyle \pm 12.2}$} &   255.0 ${\scriptstyle \pm 22.1}$ &                               NA \\
  &    &      & 0.75 &    \textbf{77.6 ${\scriptstyle \pm 9.3}$} &    81.2 ${\scriptstyle \pm 9.4}$ &   226.3 ${\scriptstyle \pm 15.1}$ &                               NA \\
  &    &      & 1.00 &    \textbf{71.0 ${\scriptstyle \pm 4.6}$} &   81.6 ${\scriptstyle \pm 12.5}$ &   222.3 ${\scriptstyle \pm 13.0}$ &                               NA \\
  \cline{2-8}
  &    &      & 0.00 &  \textbf{150.4 ${\scriptstyle \pm 16.9}$} &  151.0 ${\scriptstyle \pm 23.1}$ &   450.0 ${\scriptstyle \pm 44.5}$ &                               NA \\
  &    &      & 0.25 &  \textbf{127.0 ${\scriptstyle \pm 12.4}$} &  188.4 ${\scriptstyle \pm 27.4}$ &    452.0 ${\scriptstyle \pm 7.0}$ &                               NA \\
  & 40 & 41\% & 0.50 &  \textbf{113.4 ${\scriptstyle \pm 20.2}$} &  200.0 ${\scriptstyle \pm 33.9}$ &   424.3 ${\scriptstyle \pm 43.0}$ &                               NA \\
  &    &      & 0.75 &  \textbf{104.4 ${\scriptstyle \pm 21.3}$} &  193.0 ${\scriptstyle \pm 14.1}$ &   426.7 ${\scriptstyle \pm 34.6}$ &                               NA \\
  &    &      & 1.00 &   \textbf{92.0 ${\scriptstyle \pm 17.6}$} &   190.0 ${\scriptstyle \pm nan}$ &   422.0 ${\scriptstyle \pm 18.1}$ &                               NA \\
\bottomrule
\end{tabular}
}
    \caption{Detailed results of SHD means and standard deviations from \Cref{app:fig:empirical:interventional_full}. GIES failed to run on $d\ge 30$.}
    \label{app:tab:empirical:interventional}
\end{table}
}
\clearpage

\begin{figure}[p]
    \centering
    \includegraphics[width=\linewidth]{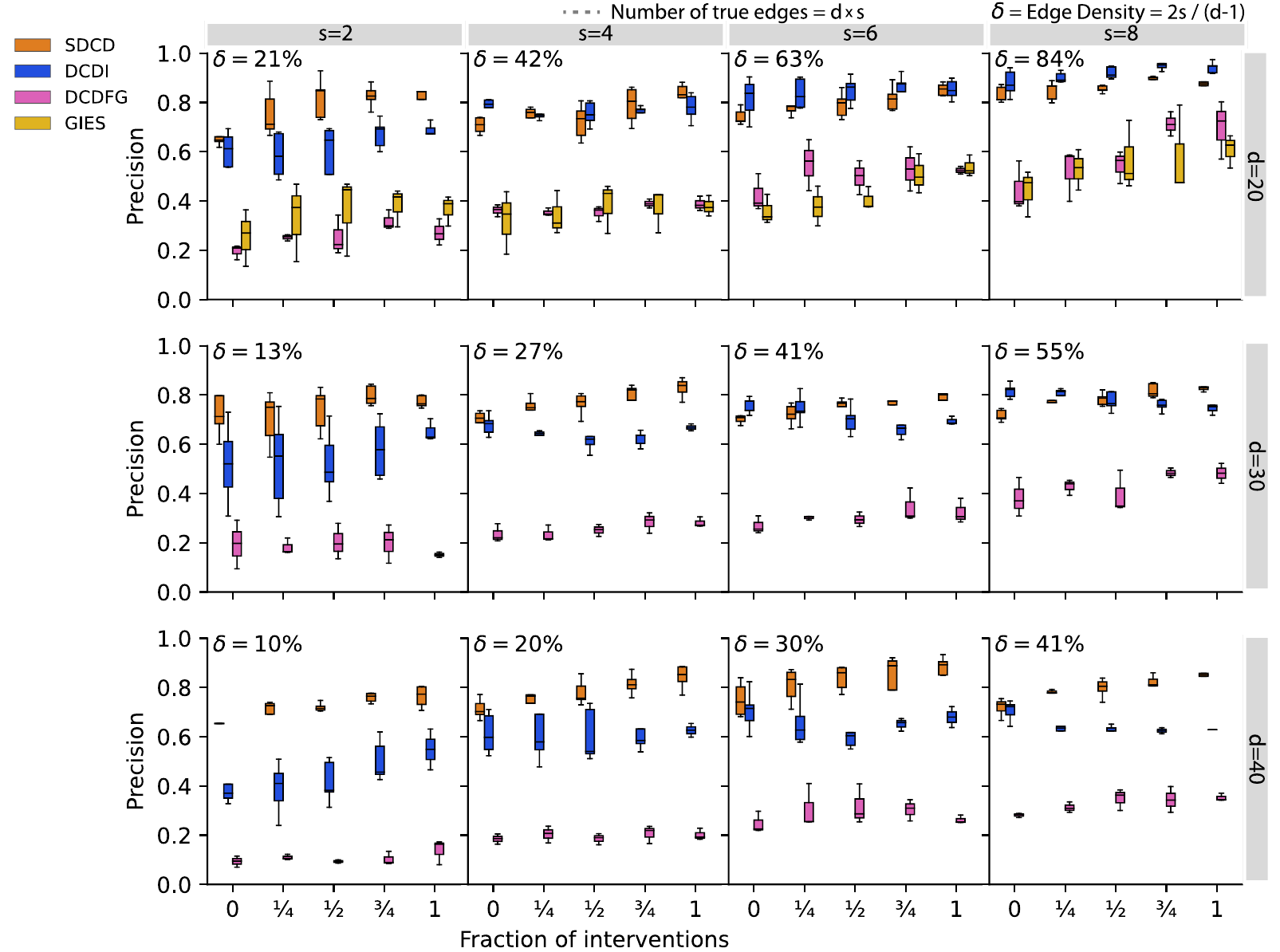}
    \caption{Precision across simulations from \Cref{app:fig:empirical:interventional_full} increasing proportion of variables intervened on, varying the total number of variables $d$ (columns) and average edges per variable $s$ (rows).
    Boxplots over 5 random datasets.}
    \label{app:fig:empirical:interventional_full_precision}
\end{figure}
\clearpage

\begin{figure}[p]
    \centering
    \includegraphics[width=\linewidth]{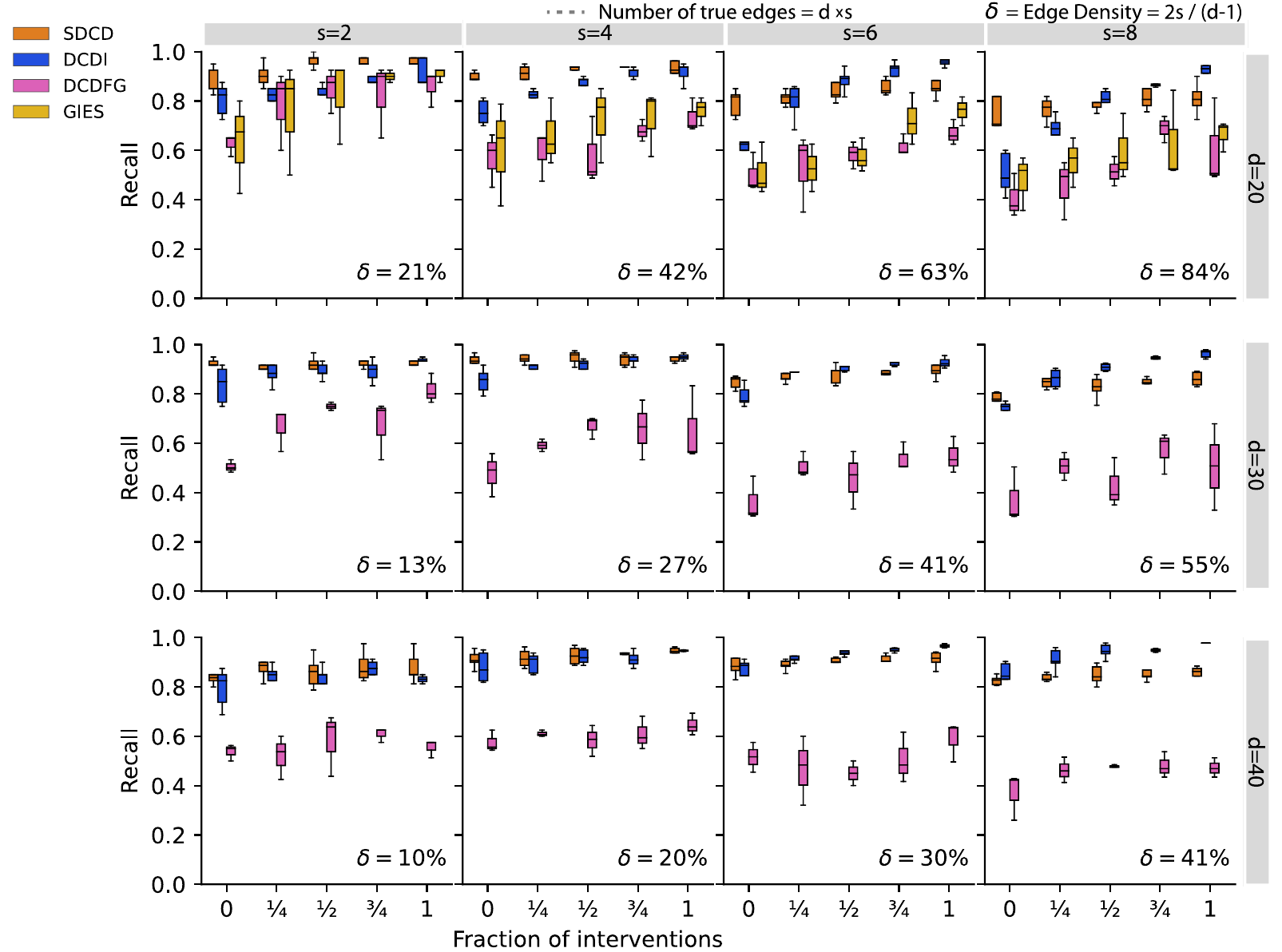}
    \caption{Recall across simulations from \Cref{app:fig:empirical:interventional_full} increasing proportion of variables intervened on, varying the total number of variables $d$ (columns) and average edges per variable $s$ (rows).
    Boxplots over 5 random datasets.}
    \label{app:fig:empirical:interventional_full_recall}
\end{figure}
\clearpage

\begin{table}[p]
    \centering
    \resizebox{0.5\columnwidth}{!}{%
\begin{tabular}{lrrrr}
\toprule
Name &    d=10 &    d=20 &     d=30 &     d=40  \\
\midrule
SDCD             &  \textbf{14.7} &  40.3 &   \textbf{54.3} &   69.0  \\
SDCD-warm        &  \textbf{14.7} &  40.7 &   55.0 &   \textbf{68.7}  \\
SDCD-warm-nomask &  19.3 &  69.7 &  156.0 &  272.7  \\
SDCD-no-s1       &  19.3 &  68.3 &  155.3 &  272.3  \\
SDCD-no-s1-2     &  16.3 &  56.7 &   95.0 &  135.0  \\
\midrule
DCDI             &  24.0 &  \textbf{35.7} &   56.7 &   87.0  \\
\bottomrule
\end{tabular}
}
    \caption{Ablation study for SDCD stage 1. We observe that the described version of SDCD performs the best out of all variations. SDCD-warm performs competitively but generally provides little benefit. SDCD-warm-nomask performs much worse than SDCD, demonstrating that enforcing the mask during stage 2 is important. We report mean SHD over three random seeds of observational data (no interventions) with a fixed number of edges per variable, $s=4$, for a range of numbers of variables, $d$. \textbf{SDCD-warm} refers to starting stage 2 of SDCD, where the input layer is ported over from stage 1 instead of re-learned. \textbf{SDCD-warm-nomask} performs the same warmstart as SDCD-warm but does not enforce the mask in stage 2. \textbf{SDCD-no-s1} only performs stage 2. \textbf{SDCD-no-s1-2} only does stage 2, but sets $(\alpha_2, \beta_2)$ to the default values from stage 1 $(\alpha_1, \beta_1)$. We report these values alongside DCDI. The lowest SHD values are bolded for each value of $d$.}
    \label{app:tab:empirical:ablation}
\end{table}

\begin{table}[p]
    \centering
    \resizebox{0.55\columnwidth}{!}{%

\begin{tabular}{lrrrrr}
\toprule
Name & $d=10$ & $d=20$ & $d=30$ & $d=40$ & $d=50$ \\
\midrule
SDCD & 13.33 & \textbf{33.47} & \textbf{54.07} & \textbf{70.80} & \textbf{76.60} \\
SDCD-exp & 11.60 & 44.33 & 69.07 & 85.07 & 89.93 \\
SDCD-log & \textbf{11.20} & 52.00 & 87.47 & 117.87 & 116.00 \\
\bottomrule
\end{tabular}
}
\caption{
Ablation study for SDCD stage 2 (choice of the constraint). We observe that SDCD performs the best out of the three variations. Additionally, the variations using the PST constraints do not crash for any of the runs, even for those with $d=50$. We attribute this improved stability (as compared to DCDI and DAGMA) to stage 1 since there are fewer non-zero parameters contributing to the value of the constraint. We report mean SHD over five random seeds of observational data (no interventions) with a fixed number of edges per variable, $s=4$, for a range of numbers of variables, $d$. \textbf{SDCD-exp} is SDCD except using the $h_\text{exp}$ constraint in place of the $h_\rho$, and \textbf{SDCD-log} uses the $h_\text{log}$ constraint.}
    \label{app:tab:empirical:ablation2}
\end{table}

\end{document}